\newtheorem{remark}{Remark}
\newtheorem{theorem}{Theorem}
\newtheorem{lemma}{Lemma}
\long\def\symbolfootnote[#1]#2{\begingroup%
 \def\thefootnote{\fnsymbol{footnote}}\footnote[#1]{#2}\endgroup}
\begin{document}

\def\Alg{\mathsf{Alg}} 
\def\reals{\mathbb{R}} 
\def\R{\mathbb{R}}
\def\Pr{\mathbb{P}}
\def\integers{\mathbb{Z}} 
\def\Z{\mathbb{Z}}
\def\rationals{\mathbb{Q}} 
\def\Q{\mathbb{Q}}
\def\naturals{\mathbb{N}} 
\def\N{\mathbb{N}}
\def\Xb {{\mathbf{X}}}
\def\Xf {\mathfrak{X}}
\def\xb {{\mathbf{x}}}
\def\yb {{\mathbf{y}}}
\def\Gc {{\mc{G}}}
\def\Vc {{\mc{V}}}
\def\Wc {{\mc{W}}}
\def\Uc {{\cal U}}
\def\Ec {{\cal E}}
\def\Xc {{\cal X}}
\def\Tc {{\cal T}}
\def\Ic {{\cal I}}
\def\Jc {{\cal J}}
\def\Ac {{\cal A}}
\def\C {{\cal C}}

\title{High-Dimensional Screening Using Multiple Grouping of Variables}
\author{Divyanshu Vats
\thanks{Copyright (c) 2013 IEEE. Personal use of this material is permitted. However, permission to use this material for any other purposes must be obtained from the IEEE by sending a request to pubs-permissions@ieee.org.}
\thanks{Divyanshu Vats is with the Department of Electrical and Computer Engineering, Rice University, Houston, TX, 77098 (email: dvats@rice.edu).}
\thanks{This work was supported by an Institute for Mathematics and its Applications (IMA) postdoctoral fellowship.}
}
\date{}

\maketitle

\begin{abstract}
Screening is the problem of finding a superset of the set of non-zero entries in an unknown $p$-dimensional vector $\beta^*$ given $n$ noisy observations.  Naturally, we want this superset to be as small as possible.  We propose a novel framework for screening, which we refer to as Multiple Grouping (MuG), that groups variables, performs variable selection over the groups, and repeats this process multiple number of times to estimate a sequence of sets that contains the non-zero entries in $\beta^*$.  Screening is done by taking an intersection of all these estimated sets.  The MuG framework can be used in conjunction with any group based variable selection algorithm.  
In the high-dimensional setting, where $p \gg n$, we show that when MuG is used with the group Lasso estimator, screening can be consistently performed without using any tuning parameter.  Our numerical simulations clearly show the merits of using the MuG framework in practice.
\end{abstract}

\begin{IEEEkeywords}
Screening, Lasso, Group Lasso, Variable Selection, Multiple Grouping, Randomized Algorithms
\end{IEEEkeywords}


\section{Introduction}

Let $\beta^* \in \R^{p \times 1}$ be an \textit{unknown} $p$-dimensional sparse vector with $k < p$ non-zero entries.  Let $S^*$ denote the support of $\beta^*$, i.e., the location of the non-zero entries in $\beta^*$.  Let $y \in \R^{n \times 1}$ be a \emph{known} $n$-dimensional vector that captures information about $\beta^*$ using the linear model
\begin{equation}
 y = X \beta^* + w \,, \label{eq:linreg}
\end{equation}
where $X \in \R^{n\times p}$ is a \emph{known} design matrix and $w$ is measurement noise.  Throughout this paper, we assume that $p > n$, i.e., the number of measurements available is smaller than the ambient dimensionality of $\beta^*$.  

Equation~(\ref{eq:linreg}) is well studied in the literature owing to its application in many real world problems.  For example, in compressive sensing, it is of interest to measure a signal $\beta^*$ using only a few measurements with a suitable choice of the design matrix $X$ \cite{candes2006robust,donoho2006compressed}.  Given gene expression data, where typically the number of observations $n$ is much smaller than the total number of genes $p$, it is of interest to study the relationships between genes \cite{wille2004sparse}.  These relationships are captured by the non-zero entries of the vector $\beta^*$.  A similar problem of estimating relationships arises when modeling economic data \cite{fan2011sparse}.

In this paper, we study the problem of finding a superset $\overline{S}$ of the support of $S^*$ so that the number of elements in $\overline{S}$ is less than $n$ and as close to $k$ as possible.  In the literature, this problem is often referred to as \textit{variable screening} or simply \textit{screening}.  Algorithms for screening are useful in reducing the dimensionality of $\beta^*$ from $p$ to $|\overline{S}|$.  This allows practitioners to focus subsequent analysis on a much smaller set rather than dealing with the large set of $p$ variables.

\subsection{Main Contributions}

\noindent
\textbf{Multiple Grouping (MuG):} We propose a general framework for screening that groups variables, performs variable selection over the groups, and repeats this process multiple number of times over different choices of the groupings.  The final estimated superset is the \emph{intersection} of the supports estimated over each grouping.  We refer to our framework as Multiple Grouping (MuG).  The main intuition behind MuG is that if a variable $v$ is selected in one iteration, it may not be selected in another iteration since the variable may be grouped with other variables that are all zero.  Figure~\ref{fig:mugframe} illustrates MuG using a simple example. 

\smallskip

\noindent
\textbf{Tuning-Free Screening:}  The MuG framework can be used in conjunction with \textit{any} group based variable selection algorithm.  We study the application of MuG with group Lasso \cite{GroupLassoYuanLin2006}, which uses a modification of the popular Lasso algorithm \cite{TibshiraniLasso1994} to perform variable selection over groups.  Using properties of the Lasso and the group Lasso, we show that when $p > n$, MuG with group Lasso can perform screening \textit{without using a tuning parameter} in such a way that $|\overline{S}| < n$.  This property of MuG is extremely useful, for example, in analyzing gene expression data where it is typical for $p$ to be of the order of thousands and $n$ to be of the order of hundreds.  Moreover, we identify conditions under which MuG is high-dimensional consistent so that $P\left(S^* \subseteq \overline{S}\right) \rightarrow 1$ as $n,p \rightarrow \infty$.

\subsection{Related Work}
\label{subsec:screening}

Several algorithms have been proposed in the literature for estimating the support of $\beta^*$, see \cite{TibshiraniLasso1994,tropp2007signal,ZouAdaptiveLasso2006,candes2008enhancing,wasserman2009high,GeerBuhlZhouAdaptive2011,zhang2011adaptive,belloni2011square} for some examples and \cite{buhlmann2011statistics} for a comprehensive review.  The performance of all the known algorithms depend on the choice of a tuning parameter that controls the cardinality of the estimated support.  To estimate the true support in the high-dimensional regime, where $p \gg n$, tuning parameters, also referred to as regularization parameters, may be chosen using stability selection \cite{meinshausen2010stability} or extended Bayesian information criterion \cite{chen2008extended}.  However, both these methods depend on other parameters which are difficult to select in practice.  The MuG framework can be used to eliminate a majority of the $p$ variables without using any tuning parameter and then standard model selection algorithms may be used over the remaining set variables.

It has been observed that when using cross-validation to select the tuning parameter, under appropriate conditions, the estimated support can be a superset of $S^*$.  However, the cardinality of the estimated support can be quite large in practice, making cross-validation based methods inappropriate for screening.

Reference \cite{fan2008sure} outlines a screening algorithm, referred to as sure independence screening (SIS), that\footnote{Assuming the columns of $X$ are normalized so that $||X_i||_2 / \sqrt{n} = 1$.} thresholds $|X^T y|$ to find the superset $\overline{S}$ of $S^*$.  Extensions of SIS have been proposed in \cite{fan2009ultrahigh,fan2010sure,fan2011nonparametric,ke2012covariance}.  The performance of SIS is sensitive to the choice of the threshold and an appropriate choice of the threshold depends on the unknown parameters of the model in~(\ref{eq:linreg}).  The main advantage of MuG over SIS is that, when $p > n$, screening may be done \textit{without using a tuning parameter or a threshold}.  Moreover, our numerical simulations clearly show that the MuG framework can discard more variables when compared to SIS.  However, we note that SIS is computationally fast, with time complexity $O(np)$, and can be used in conjunction with MuG to trade-off computational complexity and accuracy.

Recent works in \cite{tibshirani2011strong,el2011safe,xiang2011learning,xiangfast2012} have analyzed the solutions of the Lasso to derive rules for discarding variables when solving the Lasso for a \textit{particular} tuning parameter.  Our work differs from this work since we perform screening to find a superset of the true support $S^*$.  Regardless, when using the Lasso and the group Lasso with the MuG framework, the algorithms in \cite{tibshirani2011strong,el2011safe,xiang2011learning,xiangfast2012} can be used to improve the computational complexity of solving the Lasso and group Lasso problems.

Another approach to tuning-free screening is to use properties of variable selection algorithms such as the Lasso.  It is known that the Lasso can select at most $\min\{n,p\}$ variables \cite{osborne2000Lasso,liu2009estimation}.  When $p > n$, this means that screening can be done using the Lasso by choosing a tuning parameter so that the cardinality of the estimated support is $n$.  In our proposed algorithm of using MuG with group Lasso (see Algorithm~\ref{alg:muggroupLasso}), we use the Lasso to do tuning-free screening and then use the group Lasso multiple number of times to further screen for variables.  Using the property that the group Lasso can only select at most $n$ groups, we are able to perform tuning-free screening.  Our numerical simulations clearly show the improvements of using MuG versus simply using Lasso for screening.

\subsection{Paper Organization}

The rest of the paper is organized as follows.

\begin{itemize}

\item Section~\ref{sec:MuGOverview} presents the MuG framework with respect to an abstract group based variable selection algorithm.

\item Section~\ref{sec:MuGGroupLasso} shows how MuG can be used with the group Lasso estimator.

\item Section~\ref{sec:thprop} outlines conditions under which MuG leads to a screening algorithm that is high-dimensional consistent.

\item Section~\ref{sec:numsim} presents numerical simulations that show the advantages of using MuG in practice and compares MuG to other screening algorithms.

\item Section~\ref{sec:extensions} discusses some extensions of MuG.

\item Section~\ref{sec:summary} summarizes the paper.

\end{itemize}

\section{Overview of Multiple Grouping}
\label{sec:MuGOverview}

\begin{figure}
\centering
\scalebox{0.82} 
{
\begin{pspicture}(0,-1.9703125)(10.661875,1.9303125)
\definecolor{color1120b}{rgb}{0.7843137254901961,0.7843137254901961,0.7843137254901961}
\definecolor{color2b}{rgb}{0.5882352941176471,0.5882352941176471,0.5882352941176471}
\psframe[linewidth=0.0020,dimen=outer,fillstyle=solid,fillcolor=color2b](9.235782,-0.459375)(8.923905,-1.239375)
\psframe[linewidth=0.0020,dimen=outer,fillstyle=solid,fillcolor=color2b](9.235782,0.320625)(8.923905,-0.059375)
\psframe[linewidth=0.0020,dimen=outer,fillstyle=solid,fillcolor=color2b](9.235782,1.900625)(8.923905,1.540625)
\psframe[linewidth=0.0020,dimen=outer,fillstyle=solid,fillcolor=color1120b](2.4839058,-0.2696875)(2.1157813,-0.5896875)
\psframe[linewidth=0.0020,dimen=outer,fillstyle=solid,fillcolor=color1120b](2.4839058,1.3303125)(2.1157813,1.0103126)
\psframe[linewidth=0.0020,dimen=outer,fillstyle=solid,fillcolor=color1120b](3.663906,1.3303125)(3.3039062,1.0103126)
\psframe[linewidth=0.0020,dimen=outer,fillstyle=solid,fillcolor=color1120b](0.8839062,1.3303125)(0.5439062,1.0103126)
\psline[linewidth=0.04cm](0.8930912,1.0061684)(0.8947498,1.3461643)
\psline[linewidth=0.04cm](1.2930864,1.004217)(1.2947448,1.3442131)
\psline[linewidth=0.04cm](1.6930816,1.0022659)(1.6947403,1.3422617)
\psline[linewidth=0.04cm](2.093077,1.0003145)(2.0947354,1.3403105)
\psline[linewidth=0.04cm](2.4930718,0.9983632)(2.4947307,1.3383594)
\psline[linewidth=0.04cm](2.8930676,0.9965095)(2.894726,1.3365055)
\psline[linewidth=0.04cm](3.2930627,0.9944606)(3.2947211,1.3344566)
\usefont{T1}{ptm}{m}{n}
\rput{0.066784024}(0.0013669556,0.0){\rput(0.67096484,1.1606132){1}}
\usefont{T1}{ptm}{m}{n}
\rput{0.066784024}(0.0013686086,-0.0012158098){\rput(1.0359471,1.1618387){2}}
\usefont{T1}{ptm}{m}{n}
\rput{0.066784024}(0.0013636274,-0.0016496289){\rput(1.4012544,1.1595358){3}}
\usefont{T1}{ptm}{m}{n}
\rput{0.066784024}(0.0013670726,-0.0021593997){\rput(1.8476645,1.1600491){4}}
\usefont{T1}{ptm}{m}{n}
\rput{0.066784024}(0.001362344,-0.0026425668){\rput(2.2506216,1.1604383){5}}
\usefont{T1}{ptm}{m}{n}
\rput{0.066784024}(0.0013660925,-0.0030707463){\rput(2.619532,1.16094){6}}
\usefont{T1}{ptm}{m}{n}
\rput{0.066784024}(0.0013636098,-0.0035344297){\rput(3.0195246,1.1613908){7}}
\usefont{T1}{ptm}{m}{n}
\rput{0.066784024}(0.0013676978,-0.004065511){\rput(3.4667175,1.1618199){8}}
\psframe[linewidth=0.04,dimen=outer](3.7039063,1.3703125)(0.5039062,0.9703125)
\psline[linewidth=0.03cm,arrowsize=0.05291667cm 2.0,arrowlength=1.4,arrowinset=0.4]{<->}(0.5239062,0.8503125)(1.2839062,0.8503125)
\psline[linewidth=0.03cm,arrowsize=0.05291667cm 2.0,arrowlength=1.4,arrowinset=0.4]{<->}(1.3239062,1.4503125)(2.0839062,1.4503125)
\psline[linewidth=0.03cm,arrowsize=0.05291667cm 2.0,arrowlength=1.4,arrowinset=0.4]{<->}(2.1239064,0.8503125)(2.8839061,0.8503125)
\psline[linewidth=0.03cm,arrowsize=0.05291667cm 2.0,arrowlength=1.4,arrowinset=0.4]{<->}(2.903906,1.4503125)(3.663906,1.4503125)
\usefont{T1}{ptm}{m}{n}
\rput(0.8023437,0.5821875){$G_1^{1}$}
\usefont{T1}{ptm}{m}{n}
\rput(1.6623437,1.7121875){$G_2^{1}$}
\usefont{T1}{ptm}{m}{n}
\rput(2.4823437,0.5803125){$G_3^{1}$}
\usefont{T1}{ptm}{m}{n}
\rput(3.2623436,1.7121875){$G_4^{1}$}
\psframe[linewidth=0.04,dimen=outer](9.2639065,1.9303125)(8.883906,-1.2696875)
\psline[linewidth=0.04cm](8.903907,1.5303125)(9.243906,1.5303125)
\psline[linewidth=0.04cm](8.903907,1.1303124)(9.243906,1.1303124)
\psline[linewidth=0.04cm](8.903907,0.7303125)(9.243906,0.7303125)
\psline[linewidth=0.04cm](8.903907,0.3303125)(9.243906,0.3303125)
\psline[linewidth=0.04cm](8.903907,-0.0696875)(9.243906,-0.0696875)
\psline[linewidth=0.04cm](8.903907,-0.4496875)(9.243906,-0.4496875)
\psline[linewidth=0.04cm](8.903907,-0.8496875)(9.243906,-0.8496875)
\usefont{T1}{ptm}{m}{n}
\rput(9.058281,1.7003125){1}
\usefont{T1}{ptm}{m}{n}
\rput(9.076874,1.3203125){2}
\usefont{T1}{ptm}{m}{n}
\rput(9.062187,0.9203125){3}
\usefont{T1}{ptm}{m}{n}
\rput(9.048594,0.5203125){4}
\usefont{T1}{ptm}{m}{n}
\rput(9.051563,0.1203125){5}
\usefont{T1}{ptm}{m}{n}
\rput(9.040468,-0.2796875){6}
\usefont{T1}{ptm}{m}{n}
\rput(9.060469,-0.6796875){7}
\usefont{T1}{ptm}{m}{n}
\rput(9.047657,-1.0596875){8}
\psframe[linewidth=0.0020,dimen=outer,fillstyle=solid,fillcolor=color1120b](3.663906,-0.2696875)(3.3039062,-0.5896875)
\psframe[linewidth=0.0020,dimen=outer,fillstyle=solid,fillcolor=color1120b](0.8839062,-0.2696875)(0.5439062,-0.5896875)
\psline[linewidth=0.04cm](0.8930912,-0.5938316)(0.8947498,-0.2538356)
\psline[linewidth=0.04cm](1.2930864,-0.5957829)(1.2947448,-0.2557869)
\psline[linewidth=0.04cm](1.6930816,-0.5977341)(1.6947403,-0.2577383)
\psline[linewidth=0.04cm](2.093077,-0.5996855)(2.0947354,-0.2596895)
\psline[linewidth=0.04cm](2.4930718,-0.6016368)(2.4947307,-0.2616408)
\psline[linewidth=0.04cm](2.8930676,-0.6034905)(2.894726,-0.2634945)
\psline[linewidth=0.04cm](3.2930627,-0.6055394)(3.2947211,-0.2655434)
\usefont{T1}{ptm}{m}{n}
\rput{0.066784024}(0.0,0.0){\rput(0.6719593,-0.43938312){1}}
\usefont{T1}{ptm}{m}{n}
\rput{0.066784024}(0.0,-0.0012704926){\rput(1.0750521,-0.4382032){3}}
\usefont{T1}{ptm}{m}{n}
\rput{0.066784024}(0.0,-0.0017436575){\rput(1.4878678,-0.4404402){2}}
\usefont{T1}{ptm}{m}{n}
\rput{0.066784024}(0.0,-0.0021492483){\rput(1.8280215,-0.43998587){6}}
\usefont{T1}{ptm}{m}{n}
\rput{0.066784024}(0.0,-0.0026469948){\rput(2.2534878,-0.43956158){5}}
\usefont{T1}{ptm}{m}{n}
\rput{0.066784024}(0.0,-0.0030959293){\rput(2.6423907,-0.43905228){7}}
\usefont{T1}{ptm}{m}{n}
\rput{0.066784024}(0.0,-0.0035559712){\rput(3.0448885,-0.43858206){4}}
\usefont{T1}{ptm}{m}{n}
\rput{0.066784024}(0.0,-0.0040466143){\rput(3.4495733,-0.43818015){8}}
\psframe[linewidth=0.04,dimen=outer](3.7039063,-0.2296875)(0.5039062,-0.6296875)
\psline[linewidth=0.03cm,arrowsize=0.05291667cm 2.0,arrowlength=1.4,arrowinset=0.4]{<->}(0.5239062,-0.7496875)(1.2839062,-0.7496875)
\psline[linewidth=0.03cm,arrowsize=0.05291667cm 2.0,arrowlength=1.4,arrowinset=0.4]{<->}(1.3239062,-0.1496875)(2.0839062,-0.1496875)
\psline[linewidth=0.03cm,arrowsize=0.05291667cm 2.0,arrowlength=1.4,arrowinset=0.4]{<->}(2.1239064,-0.7496875)(2.8839061,-0.7496875)
\psline[linewidth=0.03cm,arrowsize=0.05291667cm 2.0,arrowlength=1.4,arrowinset=0.4]{<->}(2.903906,-0.1496875)(3.663906,-0.1496875)
\usefont{T1}{ptm}{m}{n}
\rput(0.8023437,-1.0178125){$G_1^2$}
\usefont{T1}{ptm}{m}{n}
\rput(1.6623437,0.1121875){$G_2^2$}
\usefont{T1}{ptm}{m}{n}
\rput(2.4823437,-0.9996875){$G_3^2$}
\usefont{T1}{ptm}{m}{n}
\rput(3.2623436,0.1121875){$G_4^2$}
\psline[linewidth=0.04cm,arrowsize=0.05291667cm 2.0,arrowlength=1.4,arrowinset=0.4]{->}(4.2357817,1.160625)(5.275781,1.160625)
\psline[linewidth=0.04cm,arrowsize=0.05291667cm 2.0,arrowlength=1.4,arrowinset=0.4]{->}(4.2357817,-0.439375)(5.275781,-0.439375)
\usefont{T1}{ptm}{m}{n}
\rput(4.738906,0.650625){Variable}
\usefont{T1}{ptm}{m}{n}
\rput(4.7176566,0.270625){Selection}
\usefont{T1}{ptm}{m}{n}
\rput(6.9114056,0.410625){$\bigcap$}
\usefont{T1}{ptm}{m}{n}
\rput(8.455156,0.390625){=}
\psframe[linewidth=0.0020,dimen=outer,fillstyle=solid,fillcolor=color1120b](10.3957815,-0.819375)(10.083905,-1.239375)
\psframe[linewidth=0.0020,dimen=outer,fillstyle=solid,fillcolor=color1120b](10.3957815,0.320625)(10.083905,-0.059375)
\psframe[linewidth=0.0020,dimen=outer,fillstyle=solid,fillcolor=color1120b](10.3957815,1.900625)(10.083905,1.540625)
\psframe[linewidth=0.04,dimen=outer](10.423905,1.9303125)(10.043906,-1.2696875)
\psline[linewidth=0.04cm](10.063907,1.5303125)(10.403907,1.5303125)
\psline[linewidth=0.04cm](10.063907,1.1303124)(10.403907,1.1303124)
\psline[linewidth=0.04cm](10.063907,0.7303125)(10.403907,0.7303125)
\psline[linewidth=0.04cm](10.063907,0.3303125)(10.403907,0.3303125)
\psline[linewidth=0.04cm](10.063907,-0.0696875)(10.403907,-0.0696875)
\psline[linewidth=0.04cm](10.063907,-0.4496875)(10.403907,-0.4496875)
\psline[linewidth=0.04cm](10.063907,-0.8496875)(10.403907,-0.8496875)
\usefont{T1}{ptm}{m}{n}
\rput(10.182812,1.7003125){1}
\usefont{T1}{ptm}{m}{n}
\rput(10.209685,1.3203125){2}
\usefont{T1}{ptm}{m}{n}
\rput(10.225937,0.9203125){3}
\usefont{T1}{ptm}{m}{n}
\rput(10.219064,0.5203125){4}
\usefont{T1}{ptm}{m}{n}
\rput(10.213438,0.1203125){5}
\usefont{T1}{ptm}{m}{n}
\rput(10.216562,-0.2796875){6}
\usefont{T1}{ptm}{m}{n}
\rput(10.216562,-0.6796875){7}
\usefont{T1}{ptm}{m}{n}
\rput(10.210313,-1.0596875){8}
\usefont{T1}{ptm}{m}{n}
\rput(6.8914056,1.170625){$\{1,2,5,6,7,8\}$}
\usefont{T1}{ptm}{m}{n}
\rput(6.9114056,-0.389375){$\{1,3,5,7,4,8\}$}
\usefont{T1}{ptm}{m}{n}
\rput(8.961406,-1.7034374){$\overline{S}$}
\usefont{T1}{ptm}{m}{n}
\rput(9.58,-1.7434375){$\supset$}
\usefont{T1}{ptm}{m}{n}
\rput(10.171406,-1.7534375){$S^*$}
\end{pspicture} 
}
\caption{An illustration of the Multiple Grouping (MuG) framework.  The true support is $S^* = \{1,5,8\}$ and the estimated superset is $\overline{S} = \{1,5,7,8\}$.}
\label{fig:mugframe}
\end{figure}
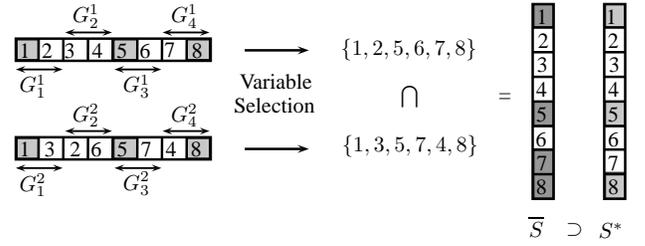

In this Section, we give an overview of the Multiple Grouping (MuG) framework when used in conjunction with an abstract variable selection algorithm.  
Let $V = \{1,\ldots,p\}$ index $\beta^*$ defined in~(\ref{eq:linreg}).  Define a collection of $K$ partitions or groupings of $V$:
\begin{align}
& {\cal G}^{i} = \left\{ G_1^{i},\ldots,G_{d_i}^{i} \right\}\,,\; 1 \le |G_j^{i}| \le m \ll n \label{eq:partone}\\
&\bigcup_{j=1}^{d_i} G_{j}^{i} = V\,,\;
G_{j_1}^{i} \cap G_{j_2}^{i} = \emptyset \,, j_1 \ne j_2 \label{eq:partfour}\,.
\end{align}
We have assumed that each group $G_j^{i}$ has at least one element and at most $m$ elements, where $m$ is small when compared to $n$ and $p$. Moreover, the groups in a grouping ${\cal G}^i$ are chosen such that they are disjoint and all elements in $V$ are mapped to a group in ${\cal G}^i$.  Let $\Alg$ be a generic variable selection algorithm:
\begin{align}
 \widehat{S}^{i} &= \Alg\left(y,X,\lambda,{\cal G}^{i}\right) \label{eq:generalalg}\\
 y &= \text{Observations in (\ref{eq:linreg})} \\
 X &= \text{Design matrix in (\ref{eq:linreg})} \\
 \lambda &= \text{Tuning parameter} \\
 {\cal G}^{i} &= \text{Defined in (\ref{eq:partone})-(\ref{eq:partfour})} \,.
\end{align}
The set $\widehat{S}^{i}$ is an estimate of the true support $S^*$.  We assume that, under certain conditions, $\Alg$ can select all groups $G^i_j$ such that $\beta^*_{G^i_j} \ne 0$.  The multiple grouping (MuG) framework for variable selection is to apply the variable selection algorithm $\Alg$ over multiple groupings ${\cal G}^{i}$ to obtain a sequence of estimates $\{\widehat{S}^{1},\ldots, \widehat{S}^{K}\}$.  The final estimated superset of the support $S^*$ is the intersection of all the estimates.  Algorithm~\ref{alg:muggrouping} summarizes the MuG framework and Figure~\ref{fig:mugframe} illustrates MuG using $K = 2$.

\begin{algorithm}[ht]
\label{alg:muggrouping}
\caption{Multiple Grouping (MuG)}
$\bullet$ Compute $\widehat{S}^{i}$ for $i = 1,\ldots,K$ using (\ref{eq:generalalg}). \\
$\bullet$ Return $\displaystyle{\overline{S} = \bigcap_{i=1}^{K} \widehat{S}^{i}}$.
\end{algorithm}

Typical applications of group based variable selection algorithms assume that it is known a priori which groups of variables in $\beta^*$ are non-zero or zero.  Our setting is different since we assume that $\beta^*$ is sparse (and not necessarily group sparse) and group variables to estimate a superset of the true support.  Since the groupings can be chosen arbitrarily, we repeat this process multiple number of times using different groupings and take an intersection over all the estimates to find the final superset of the true support.  

\smallskip

\noindent
\textbf{Relation to other methods:}
By applying a variable selection algorithm multiple number of times using different groupings, we are introducing randomness into the variable selection algorithm.  Past ways of introducing randomness have relied on subsampling \cite{BachBolasso2008,meinshausen2010stability} or random perturbations of the design matrix $X$ \cite{meinshausen2010stability}.  Our approach of using multiple groupings is a new method for introducing randomness into algorithms for improved performance.

\smallskip

\noindent
\textbf{Choosing $\lambda$:}
The parameter $\lambda$ controls the number of variables selected in each iteration of the MuG framework.  We want to choose $\lambda$ so that all variables in $S^*$ are included in each $\widehat{S}^{i}$ with high probability while $\widehat{S}^i$ is as small as possible.  This will ensure that $S^* \subseteq \overline{S}$ with high probability.  One way of doing this is by carefully choosing $\lambda$ using some model selection algorithm, such as cross-validation, stability selection, of information criterion based methods.  However, this can be computationally challenging.  An alternative approach is to assume an upper bound, say $k'$, for the unknown sparsity level and choose $\lambda$ in each iteration of MuG to select $k'$ groups.  As shown in \cite{liu2009estimation}, the group Lasso can only select at most $n$ groups, so choosing $k' = n$ when using the group Lasso allows for tuning-free screening.  We discuss this algorithm in Section~\ref{sec:MuGGroupLasso}.

\smallskip

\noindent
\textbf{Choosing $K$:}
The parameter $K$ controls the number of groupings we form in the MuG framework.  It is clear that $|\overline{S}|$ decreases or remains the same as $K$ increases.  However, we do not want $K$ to be too large since, with small probability, there may exist a grouping for which we may discard an element in $S^*$.  On the other hand, choosing $K$ to be too small may not result in significant reduction in dimensionality.  We show that when using MuG with group Lasso, choosing $K$ such that  $K/(p/m)^2 \rightarrow 0$ is sufficient to ensure consistency of the screening algorithm.  Thus, choosing $K$ of the order of $\min\{n,p\}$ is sufficient to ensure that screening can be performed with high probability.  In practice, $K$ can be chosen depending on the available computational resources.

\smallskip

\noindent
\textbf{Choosing ${\cal G}^i$:}
We discuss two methods for choosing ${\cal G}^i$.  The first method chooses ${\cal G}^i$ by randomly partitioning the set of indices $V$.  The second method chooses ${\cal G}^{i+1}$ using the estimates $\widehat{S}^{i},\ldots, \widehat{S}^{1} $.  Our numerical simulations compare both these methods and also discusses the trade-offs in choosing $m$, i.e., the maximum size of the groups in ${\cal G}^i$.

\section{MuG Using Group Lasso}
\label{sec:MuGGroupLasso}

So far, we have given an overview of the Multiple Grouping (MuG) framework using an abstract variable selection algorithm.  In this Section, we show how MuG can be used with the group Lasso estimator, which was proposed in \cite{GroupLassoYuanLin2006} as an extension to the Lasso for variable selection and prediction given prior knowledge about groups of variables that are either zero or non-zero.  Section~\ref{subsec:groupLasso} outlines the MuG framework using the group Lasso.  
Section~\ref{subsec:choosinggroups} presents an algorithm for grouping variables and empirically evaluates the algorithm using a simple numerical example.    

\subsection{Tuning-Free Screening}
\label{subsec:groupLasso}
Let ${\cal G}^i$ be a grouping defined in (\ref{eq:partone})-(\ref{eq:partfour}) with $d_i$ groups.  The weighted $(1,\nu)-$norm, using the grouping ${\cal G}^i$, is defined as follows:
\begin{equation}
||\beta||_{{\cal G}^i,\nu} = \sum_{j=1}^{d_i} \sqrt{m_{ij}} \; ||\beta_{G_{j}^{i}}||_{\nu} \,, \label{eq:l1nu}
\end{equation}
where $m_{ij} = |G_j^i|$.  The group Lasso, first proposed in \cite{GroupLassoYuanLin2006}, solves the following optimization problem:
\begin{equation}
\widehat{\beta}^i(\lambda) = \arg \min_{\beta \in \R^p} \left\{ \frac{1}{2n} ||y - X \beta ||_2^2 + \lambda ||\beta||_{{\cal G}^i,2}   \right\} \label{eq:groupLasso} \,.
\end{equation}
When $G^i_j = \{j\}$, the group Lasso reduces to the classical Lasso estimator \cite{TibshiraniLasso1994}.  Let the support over the groups be $\widehat{S}_{{\cal G}^i}(\lambda)$ such that
\begin{equation}
\widehat{S}_{{\cal G}^i}(\lambda) = \{ G_{j}^{i} : \widehat{\beta}_{G_j^i}^i(\lambda) \ne 0 \} \,.
\end{equation}
In other words, $\widehat{S}_{{\cal G}^i}(\lambda)$ is the set of groups in ${\cal G}^i$ selected by the group Lasso estimator.  The following Lemma, proved in \cite{liu2009estimation}, characterizes the cardinality of $\widehat{S}_{{\cal G}^i}(\lambda)$.

\smallskip

\begin{lemma}[\cite{liu2009estimation}]
\label{lemma:groupLassosolution}
For all $\lambda > 0$, $|\widehat{S}_{{\cal G}^i}(\lambda)| \le \min\{n,d_i\}$, where $d_i$ is the number of groups in the grouping ${\cal G}_i$.  
\end{lemma}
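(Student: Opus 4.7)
My plan is to prove the lemma via a perturbation argument built on the KKT conditions of the group Lasso in~(\ref{eq:groupLasso}), combined with elementary dimension counting in $\R^n$. The trivial half, $|\widehat{S}_{{\cal G}^i}(\lambda)|\le d_i$, holds simply because there are only $d_i$ groups, so the real work is in establishing the bound by $n$.

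The strategy is to suppose, for contradiction, that some minimizer $\widehat{\beta}^i(\lambda)$ has active set $A$ with $|A| > n$, and to exhibit a nearby minimizer with strictly smaller active set. First I would write out the subgradient stationarity condition for active groups,
$$\frac{1}{n}X_{G_j^i}^{T}\bigl(y - X\widehat{\beta}^i(\lambda)\bigr) \;=\; \lambda\sqrt{m_{ij}}\,\frac{\widehat{\beta}_{G_j^i}^i(\lambda)}{\|\widehat{\beta}_{G_j^i}^i(\lambda)\|_2}, \qquad j \in A.$$
Next I would introduce the $n$-dimensional vectors $z_j = X_{G_j^i}\widehat{\beta}_{G_j^i}^i(\lambda)$, note that more than $n$ such vectors in $\R^n$ must be linearly dependent, and pick scalars $\alpha_j$, not all zero, with $\sum_{j\in A}\alpha_j z_j = 0$. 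I would then define a perturbation $v$ supported on $A$ by $v_{G_j^i} = \alpha_j\widehat{\beta}_{G_j^i}^i(\lambda)$. Because $Xv = 0$, the data-fit term is constant along the line $\widehat{\beta}^i(\lambda) + tv$.

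The crux is then to show the penalty is also constant along this line for small $|t|$. For such $t$, each active group simply scales by $(1+t\alpha_j) > 0$, so the penalty depends linearly on $t$; I would verify its slope vanishes by pairing the KKT identity above with $\widehat{\beta}_{G_j^i}^i(\lambda)$, summing against $\alpha_j$, and invoking $\sum_j \alpha_j z_j = 0$. With the objective flat along $v$ near the origin, I can push $t$ until the first factor $(1+t\alpha_j)$ reaches zero, killing a whole group and producing a new minimizer with a strictly smaller active set. Iterating drives $|A|$ down to at most $n$, proving the claim.

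The main obstacle I anticipate is handling solution non-uniqueness cleanly, since the lemma treats $\widehat{S}_{{\cal G}^i}(\lambda)$ as a well-defined set. I would resolve this either by interpreting the statement as applying to a minimum-support minimizer (which the contraction above constructs explicitly), or by first invoking strict convexity of $\|y - X\beta\|_2^2$ in the fitted value $X\beta$ to obtain uniqueness of the residual; this forces every minimizer to share the same KKT ``directions,'' after which the same perturbation argument applies uniformly.
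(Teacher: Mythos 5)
The paper does not prove this lemma at all: it is imported verbatim by citation from Liu and Zhang \cite{liu2009estimation}, so there is no in-paper argument to compare against. Your proposal supplies a correct, self-contained proof, and it is essentially the standard one used in that reference (and in Osborne et al.\ for the ordinary Lasso): linear dependence of the $|A|>n$ fitted group vectors $z_j=X_{G_j^i}\widehat{\beta}^i_{G_j^i}$ in $\R^n$ gives a null direction $v$ of $X$ supported on $A$, and pairing the active-group KKT identity with $\widehat{\beta}^i_{G_j^i}$ and summing against the $\alpha_j$ shows the penalty has zero slope along $v$, so the objective is flat until the first coefficient $1+t\alpha_j$ hits zero and a group is killed; since at least one $\alpha_j\neq 0$, a finite such $t$ exists, and iterating contracts the active set to at most $n$. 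Two small points worth tightening. First, the same pairing $\frac{1}{n}\langle z_j, r\rangle=\lambda\sqrt{m_{ij}}\|\widehat{\beta}^i_{G_j^i}\|_2>0$ shows each $z_j\neq 0$, which is reassuring though not needed for the dependence argument; you should also note explicitly that flatness extends by continuity to the closed endpoint where the group vanishes. Second, your caveat about non-uniqueness is genuine but your second proposed fix does not close it: uniqueness of the residual pins down the common KKT dual vector and hence the set of groups \emph{eligible} to be active, but distinct minimizers can still activate different subsets of that eligible set, and in degenerate cases a minimizer with more than $n$ active groups can exist. So the honest reading of the lemma, and the one your contraction argument actually delivers, is that \emph{some} (e.g., minimum-support) minimizer satisfies $|\widehat{S}_{{\cal G}^i}(\lambda)|\le\min\{n,d_i\}$; this is also the form in which the bound is used in Algorithm~2, which only needs a solution with at most $n$ selected groups to exist.
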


\smallskip

\begin{algorithm}[t]
\label{alg:muggroupLasso}
\caption{Tuning-Free Screening}
\begin{itemize}
\item Solve (\ref{eq:groupLasso}) using grouping $G^0_j = \{j\}$ and choose the tuning parameter such that\footnotemark $|\widehat{S}^0(\lambda_0)| = n$. 
\item Initialize $\overline{S} = \widehat{S}^0(\lambda_0)$.
\item For $i = 1,\ldots,K$
\begin{itemize}
\item Choose a grouping ${\cal G}^i$ that satisfies (\ref{eq:partone})-(\ref{eq:partfour}) and $d_i > n$.
\item Solve (\ref{eq:groupLasso}) using ${\cal G}^i$ and choose$^\text{{\scriptsize 2}}$ $\lambda_i$ s.t. $|\widehat{S}_{{\cal G}^i}(\lambda_i)| = n$
\item Let $\widehat{S}^i(\lambda_i)$ be the support of the group Lasso estimator and update $\overline{S}$:
$\overline{S} = \overline{S} \cap  \widehat{S}^i(\lambda_i)$.
\end{itemize}
\end{itemize}
\end{algorithm}
\footnotetext{If $n$ variables can not be selected, the $\lambda_0$ chosen will be $\lambda_0 = \arg \max_{\lambda} |\widehat{S}^0(\lambda)|$.  Similarly, for the group Lasso, if $n$ groups can not be selected, the $\lambda_i$ chosen will be $\lambda_i = \arg \max_{\lambda}|\widehat{S}_{{\cal G}^i}(\lambda)|$.}

Using Lemma~\ref{lemma:groupLassosolution}, we see that the Lasso can select at most $\min\{n,p\}$ variables and the group Lasso can select at most $\min\{n,d_i\}$ groups of variables.  When $p > n$, we can easily perform screening by solving the Lasso to select at most $n$ variables. 

Using MuG, we may further reduce the dimensionality of the problem.  Algorithm~\ref{alg:muggroupLasso} outlines the MuG framework when used in conjunction with the group Lasso estimator in (\ref{eq:groupLasso}).  We first solve the Lasso by choosing a $\lambda$ that selects at most $n$ variables.  If $n$ variables can not be selected, we select the maximum number of variables the Lasso can select.  Next, we solve the group Lasso for multiple different choices of the groupings in such a way that at most $n$ groups are selected.  Again, if $n$ groups can not be selected, we choose the maximum number of groups possible.  The final step is to take an intersection over all the supports to find an estimate $\overline{S}$.  The algorithm is \textit{tuning-free} since we specify \textit{exactly how} the tuning parameters are chosen in the algorithm.  We note that although Algorithm~\ref{alg:muggroupLasso} depends on the parameters $K$ (number of iterations) and $m$ (maximum size of the groups), both these parameters can be easily chosen to allow screening.  We refer to Section~\ref{sec:numsim} for more details.

When using standard implementations of the Lasso and the group Lasso, it may not be computationally feasible for all solutions of the Lasso to have support of size less than or equal to $n$.  Thus, in practice, we apply the Lasso for multiple different values of $\lambda$ and choose a $\lambda$ for which the estimated support is the smallest above $n-1$ .  A similar step is done for the group Lasso solution.  If we apply group Lasso for $C$ different tuning parameters, the worst case computational complexity of Algorithm~\ref{alg:muggroupLasso} is $KC$ times the complexity of computing group Lasso solutions.  In practice, once a suitable tuning parameter has been selected in one iteration, subsequent tuning parameters becomes easier to estimate.  Moreover, different approximations to group Lasso, such as group LARS \cite{GroupLassoYuanLin2006}, may be used to speed up computations.

\subsection{Choosing the Groupings}
\label{subsec:choosinggroups}

\begin{figure}[t]
\begin{center}
\scalebox{0.7} 
{
\begin{pspicture}(0,-1.2507813)(8.66,1.2507813)
\definecolor{color0b}{rgb}{0.8823529411764706,0.8823529411764706,0.8823529411764706}
\psframe[linewidth=0.04,dimen=outer,fillstyle=solid,fillcolor=color0b](3.94,0.7529687)(0.0,0.25296876)
\psframe[linewidth=0.04,dimen=outer](8.66,0.7529687)(3.9,0.25296876)
\usefont{T1}{ptm}{m}{n}
\rput(2.165625,1.2){{\Large $\overline{S}$}}
\usefont{T1}{ptm}{m}{n}
\rput(6.4601564,1.2){\Large $\overline{S}^c$}
\psline[linewidth=0.04cm,arrowsize=0.05291667cm 2.0,arrowlength=1.4,arrowinset=0.4]{->}(2.6,0.43296874)(3.32,-0.6670312)
\psline[linewidth=0.04cm,arrowsize=0.05291667cm 2.0,arrowlength=1.4,arrowinset=0.4]{->}(5.0,0.43296874)(3.78,-0.6670312)
\usefont{T1}{ptm}{m}{n}
\rput(3.5223436,-0.96203125){{\Large ${G_j^i}$}}
\psline[linewidth=0.04cm,arrowsize=0.05291667cm 2.0,arrowlength=1.4,arrowinset=0.4]{->}(5.74,0.45296875)(6.34,-0.68703127)
\psline[linewidth=0.04cm,arrowsize=0.05291667cm 2.0,arrowlength=1.4,arrowinset=0.4]{->}(7.54,0.45296875)(6.76,-0.68703127)
\usefont{T1}{ptm}{m}{n}
\rput(6.528281,-0.96203125){{\Large ${G_{j'}^i}$}}
\end{pspicture} 
}
\end{center}
\caption{Illustration of the adaptive grouping algorithm where we either group one variable from $\overline{S}$ with variables from $\overline{S}^c$ or group variables in $\overline{S}^c$ together.}
\label{fig:adaptivegrouping}
\end{figure}
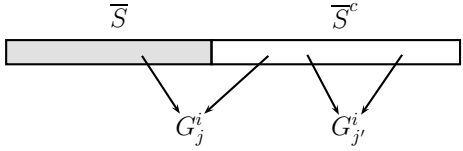

This Section addresses the problem of choosing the groupings ${\cal G}^i$ in the MuG framework.  We consider two methods.

\smallskip

\noindent
\emph{Random Groupings:} Partition the index set randomly such that each group in ${\cal G}^i$ has at most $m$ elements.

\smallskip

\noindent
\emph{Adaptive Groupings:} Let $\overline{S}$ be the current estimate after using MuG $i-1$ times.  Note that all variables in $\overline{S}^c$ are estimated to be zero in $\beta^*$.  To construct the grouping ${\cal G}^i$, randomly group an element in $\overline{S}$ with at most $m-1$ elements from $\overline{S}^c$.  This allows for grouping an element from $\overline{S}$ that is possibly zero with other elements that are estimated to be zero.  Once all elements in $\overline{S}$ have been grouped, randomly group the remaining elements in groups of size at most $m$.  Fig.~\ref{fig:adaptivegrouping} illustrates this adaptive construction.

To compare the performance of the two grouping algorithms, we consider a simple example.  Consider the linear model in (\ref{eq:linreg}) with $w \sim {\cal N}(0,I)$, $p = 100$, $n = 30$, and $k = 5$.  Suppose all non-zero elements in $\beta^*$ have magnitude $1.0$ and each entry in $X$ is sampled independently from a standard normal distribution.  Note that the MuG framework is sensitive to the choice of the $K$ groupings.  A different choice of the groupings may result in a different output $\overline{S}$.  To study the properties of $\overline{S}$, we fix $X$ and $w$ and apply Algorithm~\ref{alg:muggroupLasso} $200$ times over different choices of the $K$ set of groupings for both random and adaptive groupings.  

\begin{figure}[t]
 \begin{center}
\subfigure[Random Grouping]{
 \includegraphics[scale=0.27]{./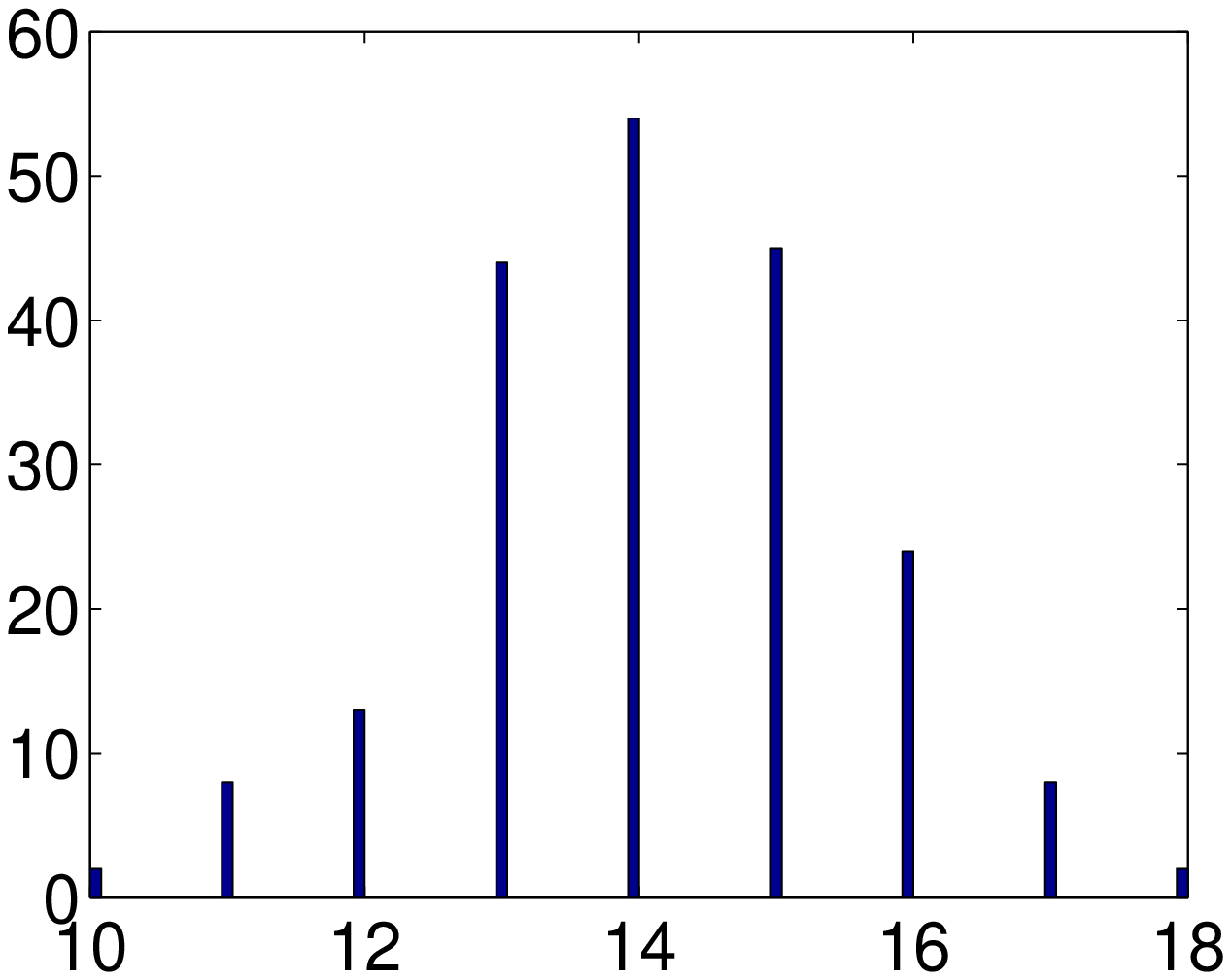}
} 
\subfigure[Adaptive Grouping]{
 \includegraphics[scale=0.27]{./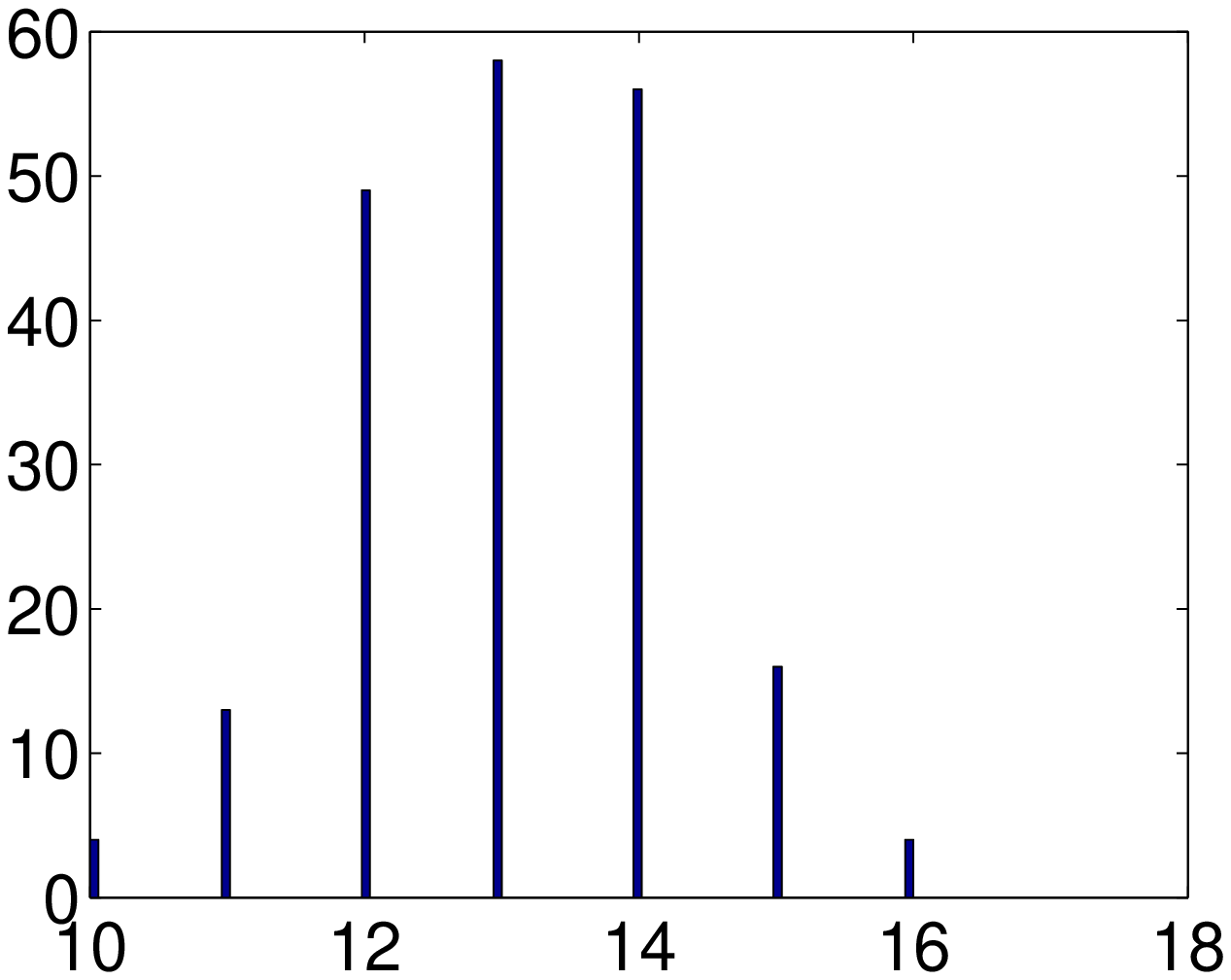}
}
\end{center}
\caption{Histogram of $|\overline{S}|$ when applying MuG $200$ times over different choices of the set of groupings.  The parameters are $p = 100$, $n = 30$, $k = 5$, $\sigma = 1$, $m = 2$, and $K = 50$.  In (a), we choose the groupings by randomly partitioning the index set in groups of size $m$.  In (b), we adaptively choose the groupings in each iteration of the MuG framework as described in Fig.~\ref{fig:adaptivegrouping}.}
\label{fig:mugframeSimpleExample}
\end{figure}

Fig.~\ref{fig:mugframeSimpleExample} shows the histogram of the cardinality of $\overline{S}$.
It is clear that the adaptive groupings approach results in estimates that have lower cardinality than that of the random groupings approach.  For all the $400$ instances ($200$ each for random and adaptive groupings), $\overline{S}$ contained the true support $S^*$.  
We also note that had we not used the group Lasso step in Algorithm~\ref{alg:muggroupLasso} and simply used the Lasso for screening, $n = 30$ variables would have been selected.  On the other hand, using the group Lasso discards on average about half the variables estimated by the Lasso.  This shows the benefits of using the MuG framework as opposed to using Lasso for tuning-free screening. We refer to Section~\ref{sec:numsim} for additional numerical simulations showing the advantages of MuG.

\section{High-Dimensional Consistency}
\label{sec:thprop}
In this Section, we find conditions under which Algorithm~\ref{alg:muggroupLasso} is high-dimensional consistent so that $P(S^* \subseteq \overline{S}) \rightarrow 1$ as $n,p \rightarrow \infty$ and $|\overline{S}| \le n$.  As it turns out, the conditions we require are nearly similar to those required for screening using the Lasso estimator.  Recall the model in (\ref{eq:linreg}) where $X$ and $y$ are known, $\beta^*$ is unknown, $k$ is the number of non-zero entries in $\beta^*$, and $w \sim {\cal N}(0,\sigma^2I)$ is the measurement noise.  Consider the following conditions on $X$ and $\beta^*$.
\medskip
\begin{enumerate}[({A}1)]
\item $\displaystyle{\frac{||X_j||_2^2}{{n}} = 1\,, \quad \text{$\forall$ } j = 1,2,\ldots,p}$.
\smallskip
\item $\displaystyle{\frac{||X \xi||_2^2}{n}  \ge \tau_1 ||\xi||_2^2 - \tau_2 \left(\sqrt{\frac{m}{n}} + \sqrt{\frac{3\log p}{n}} \right)^2  ||\xi||_{{\cal G},2}^2}$ for all groupings ${\cal G}$ satisfying (\ref{eq:partone})-(\ref{eq:partfour}), $\xi \in \R^p$, and constants $\tau_1$ and $\tau_2$.
\smallskip
\item $\displaystyle{\beta_{\text{min}} = \min_{k \in V} |\beta_k^*| > \frac{8 \sigma \sqrt{k}}{\tau} \left(\sqrt{\frac{m}{n}} + \sqrt{\frac{\log p}{n}} \right) }$ for some constant $\tau$ that depends on $\tau_1$ and $\tau_2$.
\smallskip
\item For $\lambda_1 \le \lambda_2$, $\widehat{S}^i(\lambda_2) \subseteq \widehat{S}^i(\lambda_1)$, where $\widehat{S}^i(\lambda)$ is the support estimated by group Lasso.
\end{enumerate}
\smallskip

We assume that the parameter $p$ scales with $n$ so that $p \rightarrow \infty$ as $n \rightarrow \infty$.  Further, $K$ and $k$ are also allowed to scale with $n$.  Our main result for consistency of Algorithm~\ref{alg:muggroupLasso} is stated as follows.

\begin{theorem}[Consistency of Screening]
\label{thm:main}
Under the Assumptions (A1)-(A4), $|\overline{S}| \le n$ and $P(S^* \subseteq \overline{S}) \ge 1 - \frac{c (K+1)}{(p/m)^2}$, where $c$ is a constant.  Further, if $\lim K/(p/m)^2 = 0$, then $P(S^* \subseteq \overline{S}) \rightarrow 1$ as $n,p \rightarrow \infty$, where $\overline{S}$ is computed using Algorithm~\ref{alg:muggroupLasso} and the probability is with respect to the distribution of the noise $w$.
\end{theorem}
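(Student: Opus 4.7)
The plan is to reduce the statement to a single-iteration support-recovery guarantee for the (group) Lasso and then finish by a union bound over the $K+1$ iterations of Algorithm~\ref{alg:muggroupLasso}. The cardinality claim $|\overline{S}|\le n$ is essentially immediate: by Lemma~\ref{lemma:groupLassosolution} applied with the singleton grouping $G^0_j=\{j\}$, the initial Lasso step selects at most $\min\{n,p\}=n$ indices, hence $|\widehat{S}^0(\lambda_0)|\le n$, and the intersection in Algorithm~\ref{alg:muggroupLasso} can only shrink this set. So the real content is the probabilistic containment $S^*\subseteq\overline{S}$.

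The main technical step is to show that for each fixed iteration $i\in\{0,1,\dots,K\}$, with probability at least $1-c/(p/m)^2$ the group Lasso support $\widehat{S}^i(\lambda_i)$ contains the true support $S^*$. Under (A1)–(A3) this is a standard group-Lasso ``no-false-negative'' argument: on the noise event
\[
\mathcal{E}_i=\Bigl\{\tfrac{1}{n}\|X_{G^i_j}^\top w\|_2\le\sigma\sqrt{m_{ij}}\bigl(\sqrt{m/n}+\sqrt{3\log p/n}\bigr)\text{ for all }j\Bigr\},
\]
a union bound over the $d_i\le p/m$ groups using Gaussian tails gives $\Pr(\mathcal{E}_i^c)\le c/(p/m)^2$ with $c$ absolute. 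On $\mathcal{E}_i$, the KKT conditions for (\ref{eq:groupLasso}) together with the restricted-eigenvalue-type bound (A2) yield the $\ell_2$ error bound of order $\sqrt{k}\bigl(\sqrt{m/n}+\sqrt{\log p/n}\bigr)/\tau$ for any $\lambda$ of the order of the noise term, which by (A3) is strictly smaller than $\beta_{\min}/2$. Consequently no active group can be mistakenly omitted: there is a threshold $\lambda^\star_i$ such that $\widehat{S}^i(\lambda)\supseteq S^*$ for every $\lambda\le\lambda^\star_i$. The algorithm chooses $\lambda_i$ so that $|\widehat{S}_{\mathcal{G}^i}(\lambda_i)|$ equals the maximum admissible cardinality $\min\{n,d_i\}$, hence $\lambda_i$ is at most the smallest $\lambda$ attaining this maximum; combined with the monotonicity Assumption (A4), this forces $\widehat{S}^i(\lambda_i)\supseteq\widehat{S}^i(\lambda^\star_i)\supseteq S^*$. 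The same reasoning, applied with singleton groups, handles the initial Lasso step $i=0$.

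A union bound over the $K+1$ iterations then gives
\[
\Pr(S^*\subseteq\overline{S})\ge 1-\sum_{i=0}^{K}\Pr(\mathcal{E}_i^c)\ge 1-\frac{c(K+1)}{(p/m)^2},
\]
which is the claimed non-asymptotic bound, and the asymptotic statement is immediate from $K/(p/m)^2\to 0$. The step I expect to be the main obstacle is the deterministic group-Lasso analysis: one must extract from (A2) a restricted-strong-convexity-type inequality that applies \emph{uniformly} over the random grouping $\mathcal{G}^i$, so that the ``noise threshold'' $\lambda^\star_i$ and the resulting $\ell_2/\ell_\infty$ error bounds depend on $\mathcal{G}^i$ only through the common constants $\tau_1,\tau_2,\tau$; only then can the KKT-based no-false-negative argument be combined cleanly with (A3) and (A4) to control the data-driven choice of $\lambda_i$.
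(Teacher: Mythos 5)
Your proposal is correct and follows essentially the same route as the paper: a per-iteration group-Lasso containment guarantee (the paper imports the $\ell_2$ error bound of Lemma~\ref{lemma:groupLasso} from \cite{negahban2010unified} as a black box where you sketch its derivation, then combines it with (A3) and the monotonicity assumption (A4) to control the data-driven $\lambda_i$), followed by a union bound over the $K+1$ iterations, with $|\overline{S}|\le n$ coming from the initial Lasso step and the fact that intersections only shrink the set.
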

\begin{proof}
See Appendix~\ref{subsec:proofmainthm}.
\end{proof}

\smallskip

Theorem~\ref{thm:main} identifies a sufficient condition on the scaling of the parameter $K$.  In particular, we see that choosing $K < p^{\gamma}$, for $\gamma < 2$, is sufficient for consistency of the screening algorithm.  Since we assume $p > n$, in practice we choose an appropriate $K$ so that $K < n$.  We now make some additional remarks regarding Theorem~\ref{thm:main}.

\begin{remark}[Assumption (A1)]
The normalization assumption (A1) is standard and can be easily achieved by scaling the matrix $X$ and correspondingly the vector $\beta^*$.  
\end{remark}

\begin{remark}[Assumption (A2)]
Assumption~(A2), which comes from \cite{negahban2010unified}, ensures restricted strong convexity (RSC) of the least squares loss function in (\ref{eq:groupLasso}) so that the loss function is ``not too flat" around the optimal solution \cite{negahban2010unified}.  We note that alternative conditions based on restricted eigenvalues \cite{bickel2009simultaneous,buhlmann2011statistics}, which are similar to the RSC conditions, may also be assumed instead of (A2).  As shown in \cite{negahban2010unified}, matrices $X$ whos rows are sampled from a multivariate normal distribution satisfy (A2) when given a sufficient number of observations~$n$.
\end{remark}

\begin{remark}[Assumption (A3)]
 Assumption (A3) is a standard condition that imposes a lower bound on $\beta_{\text{min}}$, the minimum absolute value of the non-zero entries in $\beta^*$.  Informally, a small $\beta_{\text{min}}$ requires more observations for consistent estimation using Lasso and group Lasso.  It is interesting to see how $\beta_{\text{min}}$ scales with the group size $m$.  If we do not use MuG and simply use the Lasso for screening, (A3) reduces\footnote{See Lemma~2 in Appendix~A.} to $\beta_{\text{min}} > \frac{8 \sigma}{\tau}  \sqrt{\frac{k\log p}{n}}$.  Using MuG with group Lasso increases the lower bound on $\beta_{\text{min}}$ by $\frac{8 \sigma}{\tau}  \sqrt{\frac{k m}{n}}$.  Thus, although the MuG framework may result in screening such that $|\overline{S}| < n$, this comes at the cost of requiring the minimum absolute value in $\beta^*$ to be slightly larger than that required when simply using the Lasso for screening ($K = 0$ in Algorithm~\ref{alg:muggroupLasso}).  
If $m \ll n$, then the increase in the lower bound on $\beta_{\min}$ is small and converges to $0$ as $n$ grows large.  Thus, it is desirable to choose $m$ as small as possible.  Another motivation for choosing $m$ to be small is so that the convergence rate for the probability of error, which is $O(K m^2/p^2)$, is small as possible.  We will numerically explore different choices of $m$ in Section~\ref{sec:numsim}.
\end{remark}

\begin{remark}[Assumption (A4)] 
Assumption~(A4) ensures that if a variable $v$ is selected for some tuning parameter $\lambda_2$, then $v$ is also selected for any tuning parameter less than $\lambda_2$.  We need (A4) since we always choose the smallest possible tuning parameter when solving the group Lasso problem.  We can easily relax (A4) so that $v \in S^*$.  Furthermore, we can modify Algorithm~\ref{alg:muggroupLasso} in the following way so that (A4) is no longer needed in Theorem~\ref{thm:main} to prove consistency of the screening algorithm:
\begin{itemize}
\item Find the regularization path of the group Lasso by solving (\ref{eq:groupLasso}) for multiple tuning parameters.  Instead of assuming that $\widehat{S}^{i}$ is the support estimated so that group Lasso selects $n$ groups, let $\widehat{S}^{i}$ be the set of all variables selected in the regularization path.
\end{itemize}
\end{remark}

\begin{remark}[Discussion]
From Remarks~1-4, it is clear that (A1)-(A3) are standard assumptions required for proving consistency of the Lasso and the group Lasso estimators.  Assumption~(A1) can be easily achieved by scaling the columns of $X$.  Assumption~(A4) is specific for MuG, but as discussed in Remark~4, it can easily be avoided using a minor modification of Algorithm~\ref{alg:muggroupLasso}.  Thus, (A2) and (A3) are the main assumptions that determine the the success of Algorithm~\ref{alg:muggroupLasso}.  As discussed in Remark~2, there are a wide class of matrices that satisfy (A2) when given an appropriate number of observations.  Assumption~(A3) is satisfied when the non-zero entries in $\beta^*$ have sufficiently large magnitude.  We note that if (A2)-(A3) do not hold, then it is likely that Algorithm~\ref{alg:muggroupLasso} will miss variables from the true support.  As it turns out, since the performance of the Lasso also depends on (A2)-(A3), using the Lasso estimator for screening will also miss variables from the true support.  The same is true for the sure independence screening (SIS) algorithm, which actually requires a stronger condition than (A2) for high-dimensional consistent screening~\cite{fan2008sure}.  In such cases, it is desirable to perform screening in such a way that $\overline{S}$ contains as many variables from the true support as possible.  Using numerical simulations in the next Section on matrices that do not satisfy (A2), we will see that MuG is able to retain more true variables when compared to the Lasso or the SIS algorithm.  Finally, we recall that unlike screening algorithms based on the Lasso or the SIS, Algorithm~\ref{alg:muggroupLasso} has the advantage of not requiring a tuning parameter.
\end{remark}

\section{Numerical Simulations}
\label{sec:numsim}

In this Section, we provide extensive numerical simulations to show the advantages of using MuG in practice.  We assume the linear model in (\ref{eq:linreg}) with $\sigma = 0.5$ and consider three different choices of the $n \times p$ design matrix $X$:

\begin{itemize}

\item (IND) Each entry $X_{ij}$ is sampled independently from ${\cal N}(0,1)$.  We let $p = 1000$ and $n = 100,300,$ or $500$ depending on the example considered.

\item (TOP) Each row in $X$ is sampled independently from ${\cal N}(0,\Sigma)$, where $\Sigma$ is a $p \times p$ covariance matrix that is Toeplitz such that $\Sigma_{ij} = \mu^{|i-j|}$, where we choose $\mu = -0.4$. We let $p = 1000$ and $n = 100, 300,$ or $500$ depending on the example considered.

\item (RL) We use preprocessed data from \cite{li2010inexact}, where $p = 587$ and $n = 148$, such that each row of $X$ corresponds to gene expression values from $p$ genes relating to Lymph node status for understanding breast cancer treatment~\cite{pittman2004integrated}.

\end{itemize}

The matrices in (IND) and (TOP) satisfy the so called mutual incoherence property \cite{ZhaoYuLasso2006,MeinshausenBuhl2006,Wainwright2009} such that exact support recovery is possible using the Lasso given sufficient number of observations.  The matrix in (RL) does not satisfy mutual incoherence, which means that no matter how many observations, the support $S^*$ can not be estimated exactly using Lasso.  We always normalize the columns of $X$ such that $||X_i||_2/\sqrt{n} = 1$.  For each design matrix, we randomly choose $\beta^*$ with a desired sparsity level and a desired $\beta_{\min}$ to simulate the observations $y$.  We emphasize that although we choose the design matrix in (RL) from real data, the actual measurements are being simulated.  This is common practice in the literature for testing the performance of sparse recovery algorithms on real design matrices \cite{meinshausen2010stability}.  We evaluate four possible screening algorithms:

\begin{itemize}
\item MuG: This is our proposed algorithm outlined in Section~\ref{sec:MuGGroupLasso} (see Algorithm~\ref{alg:muggroupLasso}) with the adaptive grouping statregy described in Section~\ref{subsec:choosinggroups}.

\item SIS: This is the sure independence screening algorithm proposed in \cite{fan2008sure}.  Given that the columns are normalized, the algorithm computes $\overline{S}$ by thresholding $\omega = |X^T y|$ such that $\overline{S} = \{i: \omega_i > \tau\}$.  When comparing MuG and SIS, we choose the threshold so that the estimates from both SIS and MuG have the same cardinality.
\item LCV: This is cross-validated Lasso, where we select the Lasso tuning parameter using cross-validation.  We randomly chose $70\%$ of the data for training and the rest for testing and applied Lasso on a grid of values and repeated this process $50$ times.  The final $\lambda$ chosen minimized the mean negative log-likelihood over the training data.  It has been shown theoretically \cite{MeinshausenBuhl2006} and observed empirically that this method may be used to perform screening.  In fact, algorithms such as the adaptive Lasso and the thresholded Lasso use LCV as the first stage in estimating $\beta^*$ and $S^*$. 
\item MuG+LCV: This computes the intersection of MuG and LCV.  The main motivation behind using this method is that since both LCV and MuG result in screening and both the methods are different, the intersection of the results from both these methods can result in a $\overline{S}$ that has lower cardinality.
\end{itemize}

\begin{figure*}
\centering
\subfigure[(IND), $p = 1000, n = 100, k = 10$]{
\includegraphics[scale=0.6]{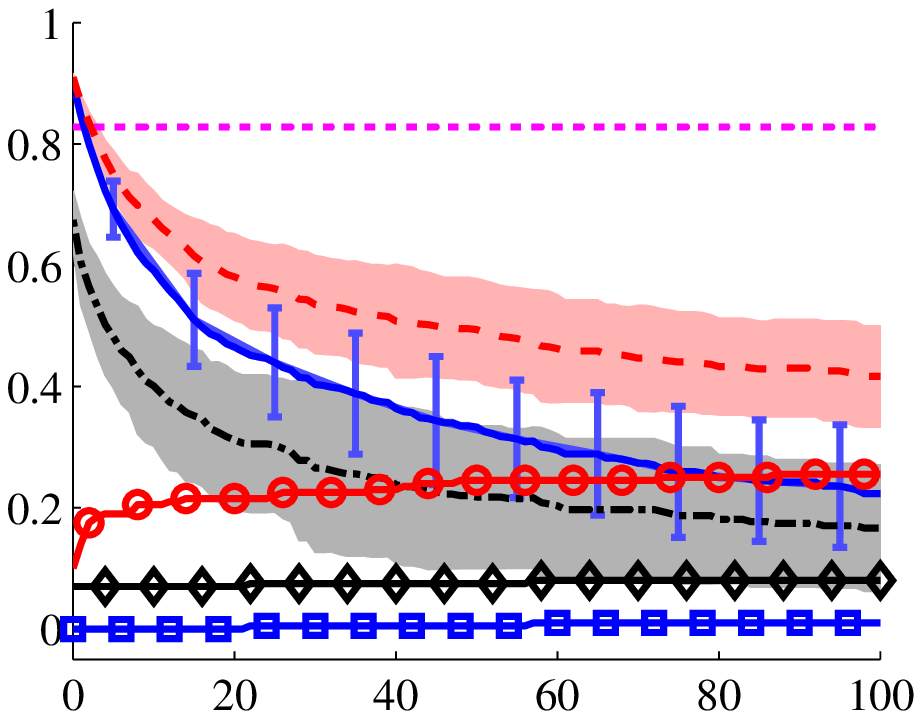}
}
\subfigure[(IND), $p = 1000, n = 300, k = 30$]{
\includegraphics[scale=0.6]{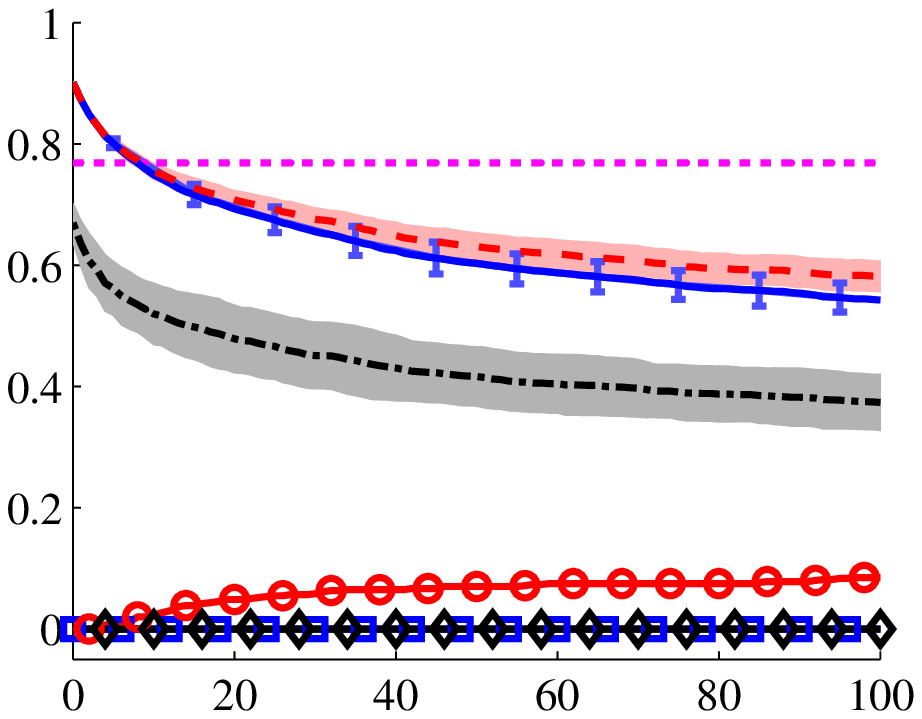}
}
\subfigure[(IND), $p = 1000, n = 500, k = 50$]{
\includegraphics[scale=0.6]{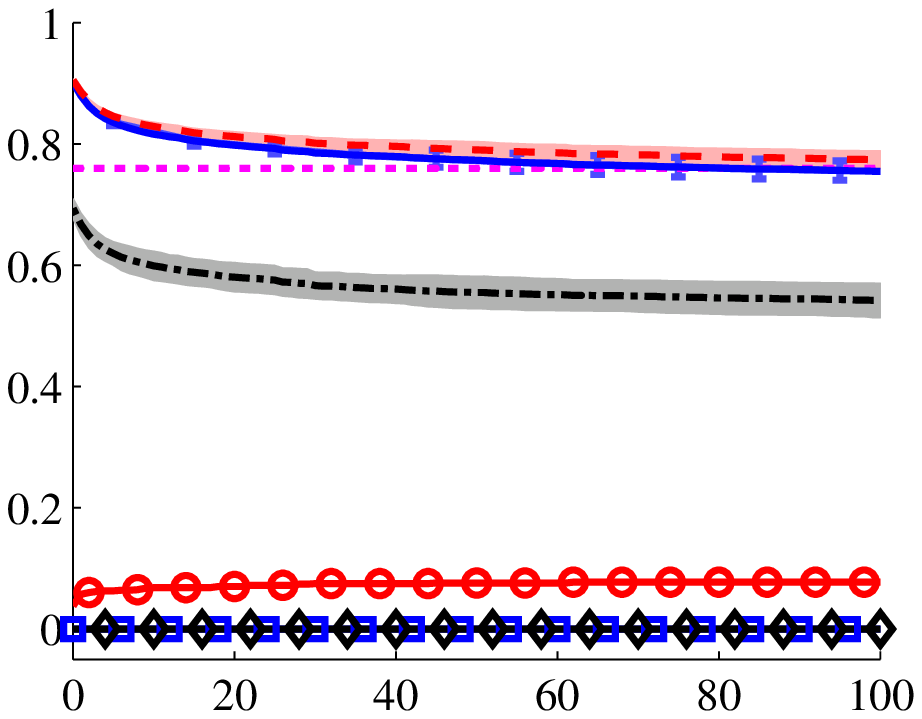}
}
\subfigure[(TOP), $p = 1000, n = 100, k = 10$]{
\includegraphics[scale=0.6]{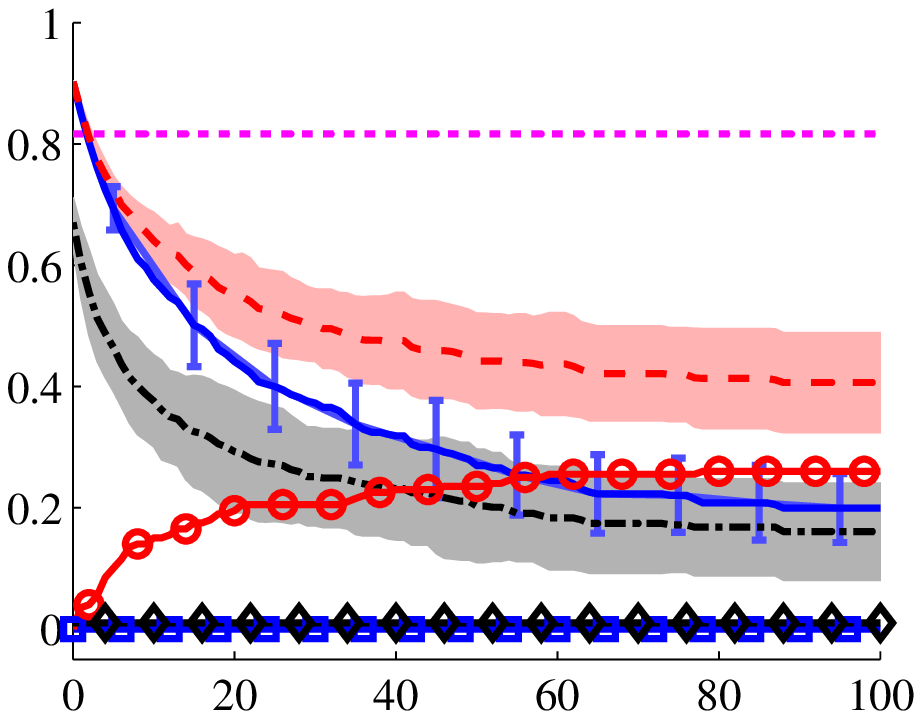}
}
\subfigure[(TOP), $p = 1000, n = 300, k = 30$]{
\includegraphics[scale=0.6]{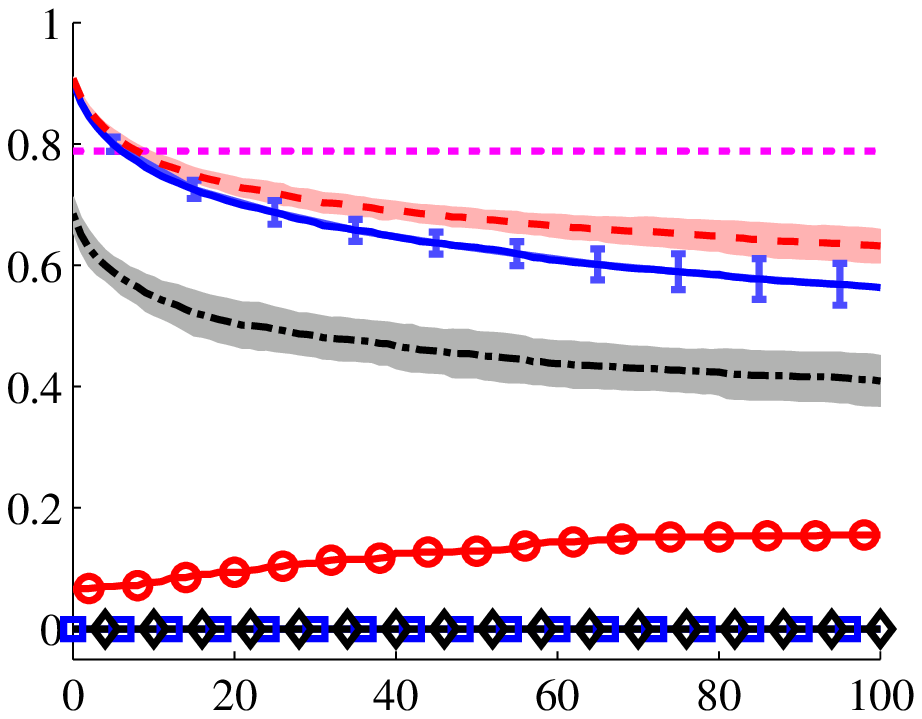}
}
\vspace{0.3cm}
\subfigure[(TOP), $p = 1000, n = 500, k = 50$]{
\includegraphics[scale=0.6]{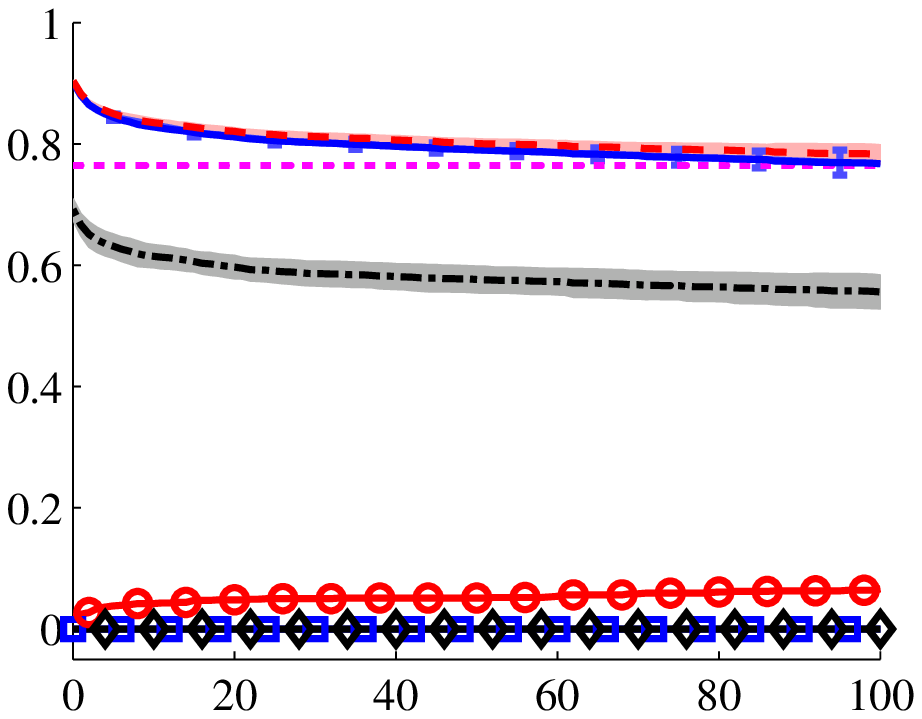}
}
\includegraphics[scale=0.6]{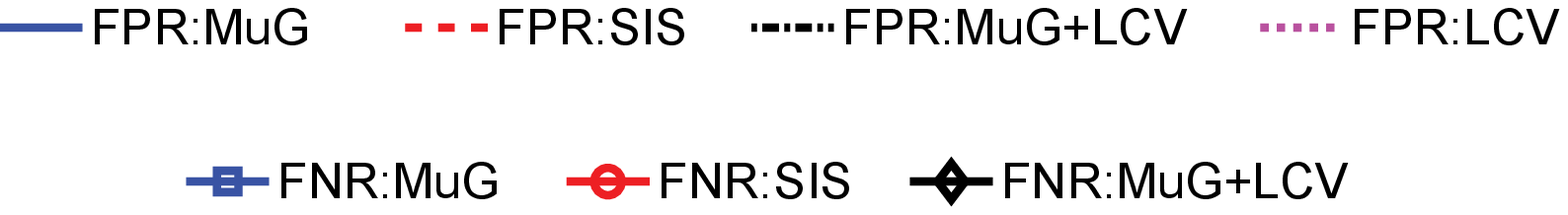}
\caption{Results when $X$ is sampled from a Gaussian distribution.  See Section~\ref{subsec:k} for more details.}
\label{fig:KSim}
\end{figure*}

We evaluate screening algorithms using (i) the fraction of variables in $\overline{S}$ that are not in $S^*$, which we denote by FPR, and (ii)  the fraction of variables in $S^*$ that are not in $\overline{S}$, which we denote by FNR:
\begin{align}
FPR = \frac{|\overline{S} \backslash S^*|}{|\overline{S}|} \;\; \text{and} \;
FNR = \frac{|{S^*} \backslash \overline{S}|}{|S^*|} \,.
\end{align}
In general, we want both the FPR and FNR to be as small as possible.  Section~\ref{subsec:k} discusses results on applying MuG for different choices of the number of groupings $K$.  Section~\ref{subsec:groupsm} discusses results on how the MuG estimates depend on the group size $m$ and the parameter $\beta_{\text{min}}$. 

\subsection{Number of Groupings $K$}
\label{subsec:k}

Figures~\ref{fig:KSim} and \ref{fig:KReal} show results on applying various screening algorithms when $X$ is generated as described by (IND), (TOP), and (RL) and $\beta_{\text{min}} = 0.5$.  The $x$-axis in all figures is the value of $K$ and the $y$-axis is either the FPR or FNR of a screening algorithm.  The lines are mean values of either the FPR or FNR and the shaded regions are the standard deviation of the FPR.  For MuG, we always show the FPR using error bars.  The LCV method is independent of $K$, which is why it does not change with $K$.

\smallskip

\noindent
\textbf{Remarks:}

\begin{enumerate}

\item We clearly see that as $K$ increases, the FPR decreases and the FNR either remains constant or increases at a very small rate.  For cases when $p \gg n$, the FPR decreases at a larger rate and the FNR increases at a small rate.  This is because, when $n \ll p$, MuG removes more variables in each iteration than when $n < p$.

\item We observe that MuG based algorithms perform better than simply using cross-validation (LCV) or using the sure independence screening (SIS) algorithm.  The difference between MuG and SIS is more pronounced in cases where $p$ is much greater than $n$, see for example Fig.~\ref{fig:KSim}(a) and Fig.~\ref{fig:KSim}(d).

\item Combining LCV and MuG, which we refer to as MuG+LCV, leads to a much smaller FPR, while only increasing the FNR by a small amount.  On the other hand, simply using LCV results in a much larger FPR.  For example, in Fig.~\ref{fig:KSim}(a), LCV has a FPR of $0.8$ whereas MuG+LCV has an FPR of about $0.2$.  This shows that the MuG estimates are clearly very different from the Lasso estimates.  We note that in the plots, we do not report the FNR values for LCV since this information can be extracted from the FNR plots for MuG+LCV and MuG.

\item The difference between the performance of MuG and SIS is more pronounced in Fig.~\ref{fig:KReal}, where the matrix $X$ corresponds to real measurements of gene expression values.  For example, in Fig.~\ref{fig:KReal}(a), MuG has an FNR of nearly $0$ and SIS has an FNR of nearly $0.8$.  This means that for the same cardinality of $\overline{S}$, the estimate of MuG contains nearly all the true variables, while SIS is only able to retain $20\%$ of the true variables.  The reason for this significant difference in performance is due to high correlations between the columns of $X$, in which case SIS is known to perform poorly \cite{genovese2012comparison}.  These correlations occur because genes in the same pathway tend to produce correlated measurements \cite{segal2003regression}. 

\end{enumerate}

\begin{figure}
\centering
\subfigure[(RL), $p = 587, n = 148, k = 10$]{
\includegraphics[scale=0.8]{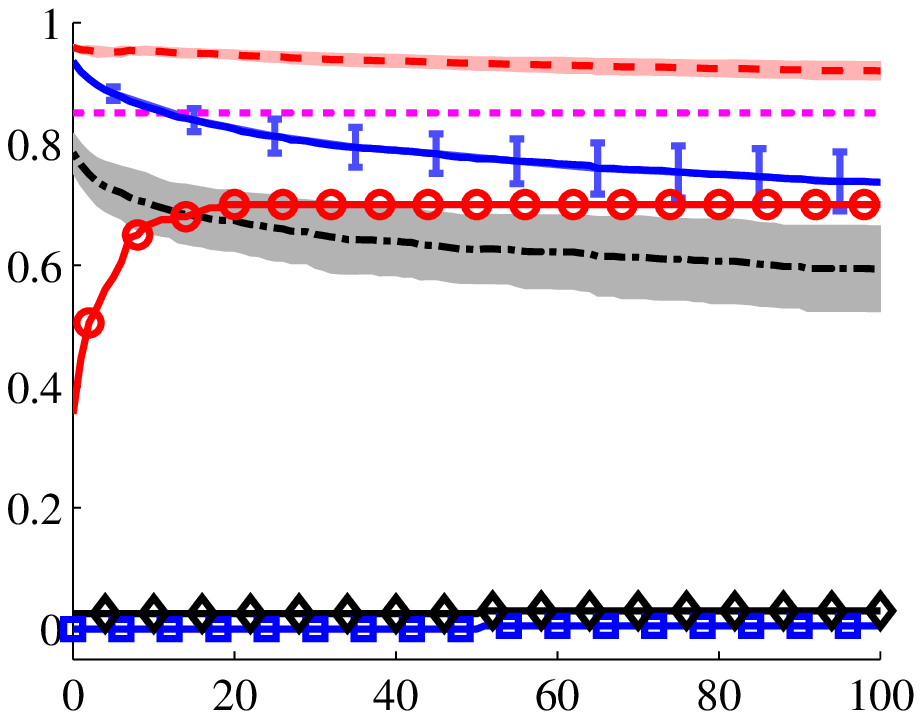}
}
\subfigure[(RL), $p = 587, n = 148, k = 20$]{
\includegraphics[scale=0.8]{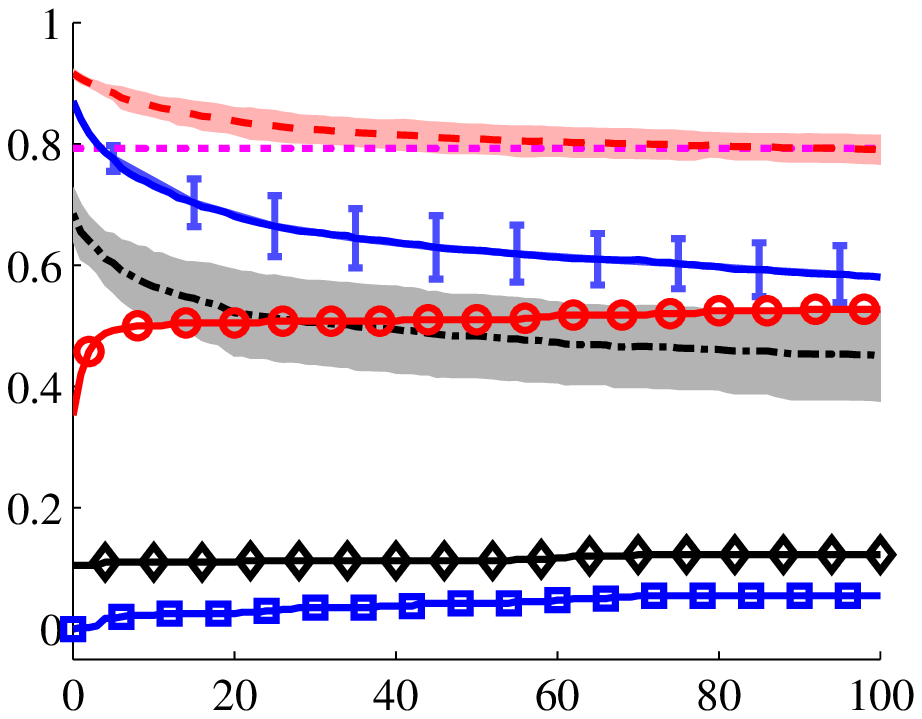}
}
\caption{Results when $X$ is the matrix of gene expression values.  See Figure~\ref{fig:KSim} for the legend and Section~\ref{subsec:k} for more details.}
\label{fig:KReal}
\end{figure}

\subsection{Size of the Groups $m$ and the Parameter $\beta_{\text{min}}$}
\label{subsec:groupsm}

In this Section, we present numerical simulations to study the performance of MuG as the size of the groupings $m$ and the parameter $\beta_{\text{min}}$ change.  Fig.~\ref{fig:exp2and3}(a) shows results for the (IND) example with $p = 1000$, $n = 100$, $k = 10$, and $\beta_{\text{min}} = 2.0$.  We applied MuG using different choices of $m$ ranging from $2$ to $10$ and chose $K = 100$.  As $m$ increases, the mean FPR first decreases and then eventually increases.  The mean FNR increases with $m$.  A similar trend is seen in Fig.~\ref{fig:exp2and3}(b), where $n = 200$.  The only difference is that the number of observations are sufficient for screening, so the FNR is zero as $m$ ranges from $2$ to $6$.  Both these examples show that choosing $m$ to be large does not necessarily result in superior screening algorithms.  In practice, we find that choosing $m=2$ results in good screening algorithms.  

Fig.~\ref{fig:exp2and3}(c) shows results on applying MuG to (IND) and (RL) where we fix all the parameters and vary $\beta_{\text{min}}$.  Only one variable in $\beta^*$ is changed, so it is expected that this particular variable will be difficult to estimate when $\beta_{\text{min}}$ is small.  This is indeed the case from the plot in Fig.~\ref{fig:exp2and3}(c).

\begin{figure*}
\centering
\subfigure[$p = 1000$ and $k = 10$]{
\includegraphics[scale=0.56]{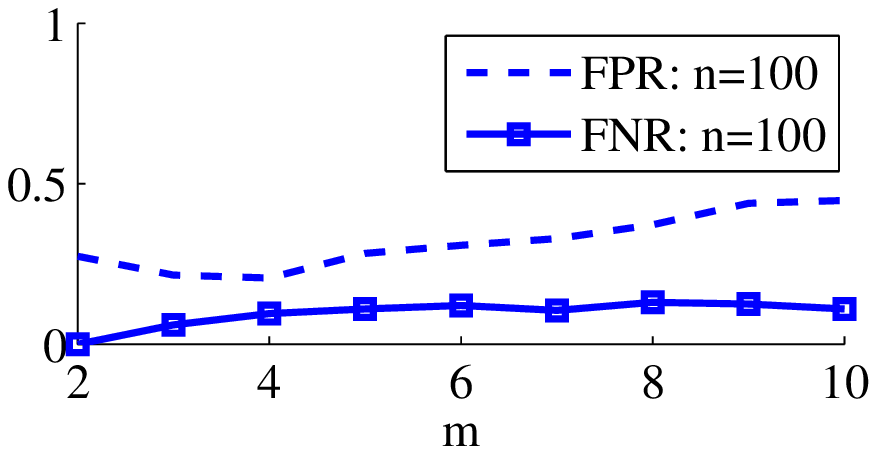}
}
\subfigure[$p = 1000$ and $k = 10$]{
\includegraphics[scale=0.56]{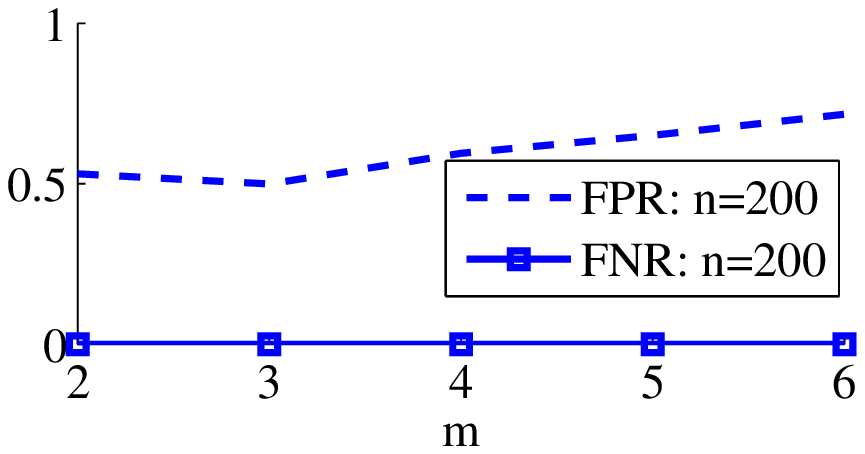}
}
\subfigure[$k = 10$]{
\includegraphics[scale=0.56]{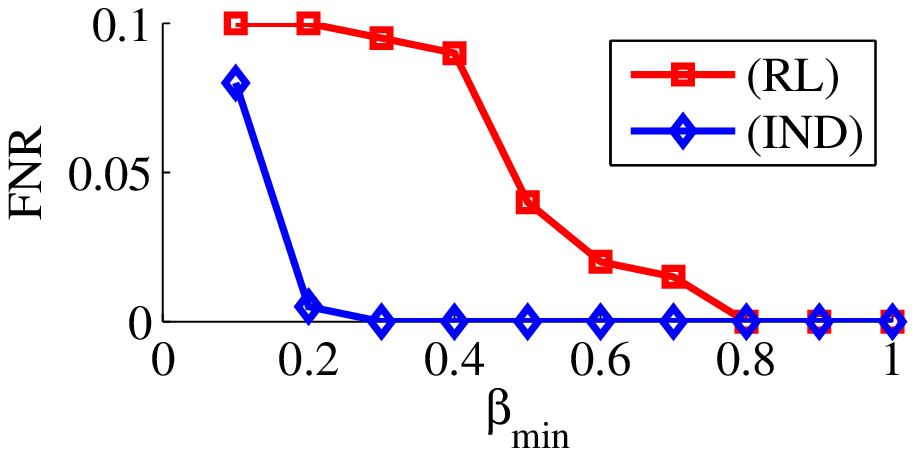}
}
\caption{Performance of MuG as the size of the groupings $m$ and $\beta_{\text{min}}$ change.  See Section~\ref{subsec:groupsm} for more details.}
\label{fig:exp2and3}
\end{figure*}

\section{Extensions and Future Research}
\label{sec:extensions}

We presented the MuG framework in the context of the linear regression problem in (\ref{eq:linreg}) with a sparsity constraint on $\beta^*$.  We now briefly discuss some extensions of MuG along with some future research directions.

\smallskip
\noindent
\textbf{Computational Complexity and Beyond Lasso:} The main focus in this paper was to present the MuG framework for variable screening and analyze it's statistical properties when used with the group Lasso estimator.  Although we saw that using MuG with group Lasso resulted in \textit{tuning free screening}, the potential disadvantage of using the group Lasso is that applying group Lasso multiple number of times may not be computationally feasible for large scale problems.  In such cases, it may be useful to first apply the computationally fast SIS algorithm and then use MuG with group Lasso to further screen variables.  Alternatively, we can also use other group based variable selection algorithms such as group LARS \cite{GroupLassoYuanLin2006}, group marginal regression \cite{bajwa2012group}, cluster representative Lasso (CRL) \cite{buhlmann2013correlated}, block orthogonal matching pursuit \cite{eldar2010block}, or block CoSaMP \cite{baraniuk2010model}.  This will be a subject of future research work.

\smallskip
\noindent
\textbf{Structured Sparsity:}  For many problems, prior knowledge can be useful in constructing better estimates of $\beta^*$.  For example, if it is known that $\beta^*$ is group sparse, group based estimators, such as those in \cite{GroupLassoYuanLin2006,eldar2010block,baraniuk2010model}, can be used to estimate $\beta^*$ using less number of observations. In this case, MuG can be easily applied by forming groupings over the known groups.  For applications in image processing \cite{baraniuk2010model,rao2011convex}, it is natural to assume the variables have a tree-structured sparsity pattern which leads to forming a set of overlapping groups.  Again, the MuG framework can be applied by forming the groupings over the overlapping groups and using algorithms in \cite{jacob2009group} for solving the overlapping group Lasso problem.

\smallskip
\noindent
\textbf{Graphical Model Selection:} A graphical model is a probability distribution defined on graphs.  The nodes in the graph denote random variables and the edges in the graph denote statistical relationships amongst random variables \cite{Lauritzen1996}.  The graphical model selection problem is to estimate the unknown graph given observations drawn from a graphical model.  One possible algorithm for estimating the graph is by solving a Lasso problem at each node in the graph to estimate the neighbors of each node \cite{MeinshausenBuhl2006}.  Our proposed algorithm using MuG in Algorithm~\ref{alg:muggroupLasso} can be used to estimate a superset of the true edges in the graph.  There are many other algorithms in the literature for learning graphical models.  One method, which is commonly referred to as the graphical Lasso \cite{BanerjeeGhaoui2008} or gLasso, solves an $\ell_1$-regularized maximum likelihood problem to estimate a graph.  The MuG framework can be applied to gLasso by placing a group penalty on the inverse covariance.  However, in this case, it is not clear if parameter-free screening can be done.  An alternative method is to assume a conservative upper bound on the number of edges in the graph to perform screening.  Our future work will explore this problem.

\smallskip
\noindent
\textbf{Exact Support Recovery:}  Our primary interest in this paper was screening, i.e., to estimate a superset of the true support.  Exact support recovery can be easily achieved by applying known algorithms for variable selection once screening has been done.  However, it is also of interest to study if exact support recovery or nearly exact support recovery can be achieved using the MuG framework.  This may require assuming some upper bound on the support of $\beta^*$ and then applying MuG with this upper bound.

\section{Summary}
\label{sec:summary}

In this paper, we presented a novel framework for screening, which we refer to as Multiple Grouping (MuG), that groups variables, performs variable selection over the groups, and repeats this process multiple number of times using different choices of the groupings.  The final superset of the true variables is computed by taking an intersection over all the estimated sets over each grouping.  The main advantage of MuG over other screening algorithms is that MuG can perform screening in the linear regression problem without using a tuning parameter.  Theoretically, we proved consistency of the tuning-free screening algorithm and our numerical simulations showed the advantages of using MuG in practice.  We also discussed some future research directions of using MuG in problems involving structured sparsity, graphical models, and exact support recovery.

\section*{Acknowledgement}
We thank Brendan Ames, Jarvis Haupt, Nikhil Rao, Adam Rothman, and Vincent Tan for discussions and comments that improved the quality of the paper.

\appendices

\section{Proof of Theorem~\ref{thm:main}}
\label{subsec:proofmainthm}

The following lemma establishes consistency of the group Lasso solutions.

\begin{lemma}[\cite{negahban2010unified}]
\label{lemma:groupLasso}
Under (A1)-(A3), there exists a $\lambda$ such that the solution $\widehat{\beta}^i(\lambda)$ to the group Lasso problem in (\ref{eq:groupLasso}) with the grouping ${\cal G}^i$ satisfies
\begin{equation}
 ||\widehat{\beta}^i(\lambda) - \beta^*||_2^2 \le \frac{64 \sigma^2 k}{\tau^2}\left(\sqrt{\frac{m}{n}} + \sqrt{\frac{\log p}{n}}\right)^2 
\end{equation}
with probability at least $1 - c_2/(p/m)^2$, where $c_1$ is a constant.
\end{lemma}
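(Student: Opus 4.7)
The plan is to invoke the standard $M$-estimation argument for decomposable regularizers, specialized to the group Lasso regularizer $\|\cdot\|_{{\cal G}^i,2}$ from \eqref{eq:l1nu}. Let $\widehat{\Delta}=\widehat{\beta}^i(\lambda)-\beta^*$ and ${\cal L}(\beta)=\|y-X\beta\|_2^2/(2n)$. The regularizer is decomposable with respect to the group-support $S=\{G^i_j:G^i_j\cap S^*\ne\emptyset\}$, which contains at most $k$ groups, and its dual norm is $\|v\|^{*}:=\max_{j}m_{ij}^{-1/2}\|v_{G_j^i}\|_2$.

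The first step is deterministic: pick $\lambda\ge 2\|X^{T}w/n\|^{*}$. Comparing $\widehat{\beta}^i(\lambda)$ to $\beta^*$ in the objective of \eqref{eq:groupLasso} yields a basic inequality that forces $\widehat{\Delta}$ into the cone $\{\Delta:\|\Delta_{S^c}\|_{{\cal G}^i,2}\le 3\|\Delta_{S}\|_{{\cal G}^i,2}\}$, whence $\|\widehat{\Delta}\|_{{\cal G}^i,2}\le 4\sqrt{k}\,\|\widehat{\Delta}\|_2$ by Cauchy--Schwarz across the at most $k$ active groups. Assumption (A2) applied to $\widehat{\Delta}$, combined with this cone bound, yields a one-sided restricted strong convexity $\|X\widehat{\Delta}\|_2^2/n\ge\tau\|\widehat{\Delta}\|_2^2$ for some $\tau$ depending on $\tau_1,\tau_2$, provided $k(\sqrt{m/n}+\sqrt{\log p/n})^2$ is small compared to $\tau_1/\tau_2$ (which holds in the regime where (A3) is meaningful). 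Combining with the other half of the basic inequality $\|X\widehat{\Delta}\|_2^2/(2n)\le(3\lambda/2)\|\widehat{\Delta}_{S}\|_{{\cal G}^i,2}\le(3\lambda\sqrt{k}/2)\|\widehat{\Delta}\|_2$ delivers $\|\widehat{\Delta}\|_2\le 3\lambda\sqrt{k}/\tau$, hence $\|\widehat{\Delta}\|_2^2\le 9\lambda^2 k/\tau^2$.

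The second step is probabilistic: set $\lambda=C\sigma(\sqrt{m/n}+\sqrt{\log p/n})$ with a universal $C$ calibrated so that the resulting prefactor equals $64\sigma^2 k/\tau^2$, and show that this $\lambda$ dominates $2\|X^{T}w/n\|^{*}$ with probability at least $1-c_2/(p/m)^2$. For each group $j$, $\eta_j:=X_{G_j^i}^{T}w/n$ is a centered Gaussian vector with $E\|\eta_j\|_2^2=\sigma^2\mathrm{tr}(X_{G_j^i}^{T}X_{G_j^i})/n^2=\sigma^2 m_{ij}/n$ by (A1). A chi-square (or Hanson--Wright) tail bound on $\|\eta_j\|_2^2$ gives $\|\eta_j\|_2\le\sigma\sqrt{m_{ij}/n}+C'\sigma\sqrt{t/n}$ with probability at least $1-2e^{-t}$; taking $t=2\log(p/m)+\log(2c_2)$ and union-bounding over the at most $p/m$ groups, then dividing by $\sqrt{m_{ij}}\ge 1$, yields the required control on $\|X^{T}w/n\|^{*}$.

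The main obstacle is reconciling the $\log p$ factor inside the final bound with the $(p/m)^2$ failure probability. The natural choice $t\asymp\log(p/m)$ produces a deviation term of order $\sqrt{\log(p/m)/n}\le\sqrt{\log p/n}$, which is absorbed into universal constants so that the stated form with $\sqrt{\log p/n}$ is preserved. A secondary subtlety is that the Hanson--Wright constant depends on the operator norm of $X_{G_j^i}^{T}X_{G_j^i}/n$ rather than merely its trace, so one must either strengthen (A1) to a per-submatrix spectral bound (automatic up to constants when the rows of $X$ are sub-Gaussian) or invoke the sub-Gaussian design assumption used in \cite{negahban2010unified} directly. This bookkeeping between the ``in-group'' $\sqrt{m/n}$ piece and the ``across-group'' $\sqrt{\log p/n}$ piece is the only step that requires genuine care; the remainder of the argument is a textbook application of the Negahban--Ravikumar--Wainwright--Yu template.
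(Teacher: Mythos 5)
The paper offers no proof of this lemma at all---it is quoted directly from \cite{negahban2010unified}---and your argument is precisely the decomposable-regularizer template of that reference (basic inequality, cone condition, restricted strong convexity via (A2), Gaussian tail bound plus union bound over the $p/m$ groups to set $\lambda$), so you have reconstructed the intended proof along essentially the same route. The one bookkeeping caution is that the paper's norm in (\ref{eq:l1nu}) carries $\sqrt{m_{ij}}$ weights, so the subspace compatibility constant is $\sqrt{km}$ rather than the $\sqrt{k}$ you use, while the correspondingly weighted dual norm of $X^{T}w/n$ is smaller by the matching factor $\sqrt{m_{ij}}$; these two $\sqrt{m}$ factors cancel in the final bound, but as written your intermediate displays pair the weighted dual norm with the unweighted compatibility constant, an inconsistency you should resolve in one direction or the other (it traces back to a mismatch between (\ref{eq:l1nu}) and the unweighted group norm used in the cited corollary).
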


Using Lemma~\ref{lemma:groupLasso}, there exists a $\lambda$ such that if $k \le \min\{n,p\}$ and 
\begin{align}
&\beta_{\text{min}}^2 > \frac{64 \sigma^2 k}{\tau^2}\left(\sqrt{\frac{m}{n}} + \sqrt{\frac{\log p}{n}}\right)^2\,,\;\text{then} \label{eq:betamingLasso} \\
&P(S^* \subseteq \widehat{S}^i(\lambda) \, ; \, {{\cal G}^i}) \ge 1 - \frac{c_2}{(p/m)^2} \,.
\end{align}
Choosing $\beta_{\text{min}}$ as in (A3) ensures that (\ref{eq:betamingLasso}) is satisfied.  Under Assumption~(A4), choosing any $\lambda_i < \lambda$ ensures that the support is contained in $S^*$.  Thus, we have
\begin{align*}
P(S^* \subseteq \widehat{S}^i(\lambda_i) \,;\, {{\cal G}^i}) \ge 1 - \frac{c_3}{(p/m)^2} \,, \quad  i = 0,1,\ldots,K \,,
\end{align*}
where $c_3$ is a constant, $\lambda_i$ is the tuning parameter chosen in Algorithm~\ref{alg:muggroupLasso}, and we let ${\cal G}^0 = \{\{1\},\ldots,\{p\}\}$.  To complete the proof, we have
\begin{align}
&P(S^* \subseteq \overline{S} \,;\, \{{\cal G}^0,{\cal G}^1,\ldots,{\cal G}^K\}) \nonumber\\
&= P\left( \bigcap_{i=0}^{K}\{S^* \subseteq \widehat{S}(\lambda_i)\} \,;\, \{{\cal G}^0,{\cal G}^1,\ldots,{\cal G}^K\} \right) \nonumber \\
&= 1 - P\left( \bigcup_{i=1}^{K} \{S^* \not\subseteq \widehat{S}(\lambda_i)\} \,;\, \{{\cal G}^0,{\cal G}^1,\ldots,{\cal G}^K \} \right) \nonumber \\
&\ge 1 - \frac{c_3 (K+1) }{(p/m)^2} \label{eq:jjjj4}
\end{align}
We use the union bound to get (\ref{eq:jjjj4}).  Choosing $K$ such that $\lim \frac{K}{(p/m)^2} \rightarrow 0$ ensures that $P(S^* \subseteq \overline{S} \,;\, \{{\cal G}^0,{\cal G}^1,\ldots,{\cal G}^K \} ) \rightarrow 1$ as $n,p \rightarrow \infty$.
Thus, given a set of groupings $\{{\cal G}^0,{\cal G}^1,\ldots,{\cal G}^K \}$ , we have established consistency of the MuG screening algorithm.  If the groupings are chosen randomly, either using the random grouping or adaptive grouping approaches outlined in Section~\ref{subsec:choosinggroups}, we will still get the same consistency result since the bound in (\ref{eq:jjjj4}) only depends on the maximum size of the group $m$.


\begin{thebibliography}{10}
\providecommand{\url}[1]{#1}
\csname url@samestyle\endcsname
\providecommand{\newblock}{\relax}
\providecommand{\bibinfo}[2]{#2}
\providecommand{\BIBentrySTDinterwordspacing}{\spaceskip=0pt\relax}
\providecommand{\BIBentryALTinterwordstretchfactor}{4}
\providecommand{\BIBentryALTinterwordspacing}{\spaceskip=\fontdimen2\font plus
\BIBentryALTinterwordstretchfactor\fontdimen3\font minus
  \fontdimen4\font\relax}
\providecommand{\BIBforeignlanguage}[2]{{%
\expandafter\ifx\csname l@#1\endcsname\relax
\typeout{** WARNING: IEEEtran.bst: No hyphenation pattern has been}%
\typeout{** loaded for the language `#1'. Using the pattern for}%
\typeout{** the default language instead.}%
\else
\language=\csname l@#1\endcsname
\fi
#2}}
\providecommand{\BIBdecl}{\relax}
\BIBdecl

\bibitem{candes2006robust}
E.~Cand{\`e}s, J.~Romberg, and T.~Tao, ``Robust uncertainty principles: Exact
  signal reconstruction from highly incomplete frequency information,''
  \emph{IEEE Transactions on Information Theory}, vol.~52, no.~2, pp. 489--509,
  2006.

\bibitem{donoho2006compressed}
D.~Donoho, ``Compressed sensing,'' \emph{IEEE Transactions on Information
  Theory}, vol.~52, no.~4, pp. 1289--1306, 2006.

\bibitem{wille2004sparse}
A.~Wille, P.~Zimmermann, E.~Vranov{\'a}, A.~F{\"u}rholz, O.~Laule, S.~Bleuler,
  L.~Hennig, A.~Prelic, P.~Von~Rohr, L.~Thiele \emph{et~al.}, ``Sparse
  graphical gaussian modeling of the isoprenoid gene network in arabidopsis
  thaliana,'' \emph{Genome Biol}, vol.~5, no.~11, p. R92, 2004.

\bibitem{fan2011sparse}
J.~Fan, J.~Lv, and L.~Qi, ``Sparse high dimensional models in economics,''
  \emph{Annual review of economics}, vol.~3, p. 291, 2011.

\bibitem{GroupLassoYuanLin2006}
M.~Yuan and Y.~Lin, ``Model selection and estimation in regression with grouped
  variables,'' \emph{Journal of The Royal Statistical Society Series B},
  vol.~68, no.~1, pp. 49--67, 2006.

\bibitem{TibshiraniLasso1994}
R.~Tibshirani, ``Regression shrinkage and selection via the {L}asso,''
  \emph{Journal of the Royal Statistical Society, Series B}, vol.~58, pp.
  267--288, 1996.

\bibitem{tropp2007signal}
J.~Tropp and A.~Gilbert, ``Signal recovery from random measurements via
  orthogonal matching pursuit,'' \emph{IEEE Transactions on Information
  Theory}, vol.~53, no.~12, pp. 4655--4666, 2007.

\bibitem{ZouAdaptiveLasso2006}
H.~Zou, ``The adaptive {L}asso and its oracle properties,'' \emph{Journal of
  the American Statistical Association}, vol. 101, pp. 1418--1429, December
  2006.

\bibitem{candes2008enhancing}
E.~J. Candes, M.~B. Wakin, and S.~P. Boyd, ``Enhancing sparsity by reweighted
  $\ell_1$-minimization,'' \emph{Journal of Fourier Analysis and Applications},
  vol.~14, no. 5-6, pp. 877--905, 2008.

\bibitem{wasserman2009high}
L.~Wasserman and K.~Roeder, ``High dimensional variable selection,''
  \emph{Annals of statistics}, vol.~37, no.~5A, p. 2178, 2009.

\bibitem{GeerBuhlZhouAdaptive2011}
S.~A. van~de Geer, P.~B{\"u}hlmann, and S.~Zhou, ``The adaptive and the
  thresholded {L}asso for potentially misspecified models (and a lower bound
  for the {L}asso),'' \emph{Electronic Journal of Statistics}, vol.~5, pp.
  688--749, 2011.

\bibitem{zhang2011adaptive}
T.~Zhang, ``Adaptive forward-backward greedy algorithm for learning sparse
  representations,'' \emph{IEEE Transactions on Information Theory}, vol.~57,
  no.~7, pp. 4689--4708, 2011.

\bibitem{belloni2011square}
A.~Belloni, V.~Chernozhukov, and L.~Wang, ``Square-root {L}asso: pivotal
  recovery of sparse signals via conic programming,'' \emph{Biometrika},
  vol.~98, no.~4, pp. 791--806, 2011.

\bibitem{buhlmann2011statistics}
P.~B{\"u}hlmann and S.~van~de Geer, \emph{Statistics for High-Dimensional Data:
  Methods, Theory and Applications}.\hskip 1em plus 0.5em minus 0.4em\relax
  Springer-Verlag New York Inc, 2011.

\bibitem{meinshausen2010stability}
N.~Meinshausen and P.~B{\"u}hlmann, ``Stability selection,'' \emph{Journal of
  the Royal Statistical Society: Series B (Statistical Methodology)}, vol.~72,
  no.~4, pp. 417--473, 2010.

\bibitem{chen2008extended}
J.~Chen and Z.~Chen, ``Extended {B}ayesian information criteria for model
  selection with large model spaces,'' \emph{Biometrika}, vol.~95, no.~3, pp.
  759--771, 2008.

\bibitem{fan2008sure}
J.~Fan and J.~Lv, ``Sure independence screening for ultrahigh dimensional
  feature space,'' \emph{Journal of the Royal Statistical Society: Series B
  (Statistical Methodology)}, vol.~70, no.~5, pp. 849--911, 2008.

\bibitem{fan2009ultrahigh}
J.~Fan, R.~Samworth, and Y.~Wu, ``Ultrahigh dimensional feature selection:
  beyond the linear model,'' \emph{Journal of Machine Learning Research},
  vol.~10, pp. 2013--2038, 2009.

\bibitem{fan2010sure}
J.~Fan and R.~Song, ``Sure independence screening in generalized linear models
  with np-dimensionality,'' \emph{Annals of Statistics}, vol.~38, no.~6, pp.
  3567--3604, 2010.

\bibitem{fan2011nonparametric}
J.~Fan, Y.~Feng, and R.~Song, ``Nonparametric independence screening in sparse
  ultra-high-dimensional additive models,'' \emph{Journal of the American
  Statistical Association}, vol. 106, no. 494, pp. 544--557, 2011.

\bibitem{ke2012covariance}
T.~Ke, J.~Jin, and J.~Fan, ``Covariance assisted screening and estimation,''
  \emph{Arxiv preprint arXiv:1205.4645}, 2012.

\bibitem{tibshirani2011strong}
R.~Tibshirani, J.~Bien, J.~Friedman, T.~Hastie, N.~Simon, J.~Taylor, and
  R.~Tibshirani, ``Strong rules for discarding predictors in {L}asso-type
  problems,'' \emph{Journal of the Royal Statistical Society: Series B
  (Statistical Methodology)}, 2011.

\bibitem{el2011safe}
L.~El~Ghaoui, V.~Viallon, and T.~Rabbani, ``Safe feature elimination for the
  {L}asso,'' \emph{Journal of Machine Learning Research. Submitted}, 2011.

\bibitem{xiang2011learning}
Z.~Xiang, H.~Xu, and P.~Ramadge, ``Learning sparse representations of high
  dimensional data on large scale dictionaries,'' in \emph{Advances in Neural
  Information Processing Systems (NIPS)}, 2011.

\bibitem{xiangfast2012}
Z.~J. Xiang and P.~J. Ramadge, ``Fast {L}asso screening tests based on
  correlations,'' in \emph{2012 IEEE International Conference on Acoustics,
  Speech and Signal Processing (ICASSP)}.\hskip 1em plus 0.5em minus
  0.4em\relax IEEE, 2012, pp. 2137--2140.

\bibitem{osborne2000Lasso}
M.~Osborne, B.~Presnell, and B.~Turlach, ``On the {L}asso and its dual,''
  \emph{Journal of Computational and Graphical statistics}, pp. 319--337, 2000.

\bibitem{liu2009estimation}
H.~Liu and J.~Zhang, ``Estimation consistency of the group {L}asso and its
  applications,'' in \emph{Proceedings of the Twelfth International Conference
  on Artificial Intelligence and Statistics (AISTATS)}, 2009.

\bibitem{BachBolasso2008}
F.~Bach, ``Bolasso: model consistent {L}asso estimation through the
  bootstrap,'' in \emph{Proceedings of the International Conference on Machine
  Learning (ICML)}, 2008.

\bibitem{negahban2010unified}
S.~Negahban, P.~Ravikumar, M.~Wainwright, and B.~Yu, ``A unified framework for
  high-dimensional analysis of $ m $-estimators with decomposable
  regularizers,'' \emph{Statistical Science}, vol.~27, no.~4, pp. 538--557,
  2012.

\bibitem{bickel2009simultaneous}
P.~Bickel, Y.~Ritov, and A.~Tsybakov, ``Simultaneous analysis of {L}asso and
  {D}antzig selector,'' \emph{Annals of Statistics}, vol.~37, no.~4, pp.
  1705--1732, 2009.

\bibitem{li2010inexact}
L.~Li and K.~Toh, ``An inexact interior point method for $\ell_1$-regularized
  sparse covariance selection,'' \emph{Mathematical Programming Computation},
  vol.~2, no.~3, pp. 291--315, 2010.

\bibitem{pittman2004integrated}
J.~Pittman, E.~Huang, H.~Dressman, C.~Horng, S.~Cheng, M.~Tsou, C.~Chen,
  A.~Bild, E.~Iversen, A.~Huang \emph{et~al.}, ``Integrated modeling of
  clinical and gene expression information for personalized prediction of
  disease outcomes,'' \emph{Proceedings of the National Academy of Sciences of
  the United States of America}, vol. 101, no.~22, p. 8431, 2004.

\bibitem{ZhaoYuLasso2006}
P.~Zhao and B.~Yu, ``On model selection consistency of {L}asso,'' \emph{Journal
  of Machine Learning Research}, vol.~7, pp. 2541--2563, 2006.

\bibitem{MeinshausenBuhl2006}
N.~Meinshausen and P.~B{\"u}hlmann, ``{High-dimensional graphs and variable
  selection with the {L}asso},'' \emph{Annals of statistics}, vol.~34, no.~3,
  p. 1436, 2006.

\bibitem{Wainwright2009}
M.~J. Wainwright, ``Sharp thresholds for noisy and high-dimensional recovery of
  sparsity using $\ell_1$-constrained quadratic programming ({L}asso),''
  \emph{IEEE Transactions on Information Theory}, vol.~55, no.~5, May 2009.

\bibitem{genovese2012comparison}
C.~Genovese, J.~Jin, L.~Wasserman, and Z.~Yao, ``A comparison of the {L}asso
  and marginal regression,'' \emph{Journal of Machine Learning Research},
  vol.~13, pp. 2107--2143, 2012.

\bibitem{segal2003regression}
M.~Segal, K.~Dahlquist, and B.~Conklin, ``Regression approaches for microarray
  data analysis,'' \emph{Journal of Computational Biology}, vol.~10, no.~6, pp.
  961--980, 2003.

\bibitem{bajwa2012group}
W.~U. Bajwa and D.~G. Mixon, ``Group model selection using marginal
  correlations: The good, the bad and the ugly,'' in \emph{2012 50th Annual
  Allerton Conference on Communication, Control, and Computing (Allerton)},
  2012, pp. 494--501.

\bibitem{buhlmann2013correlated}
P.~Bühlmann, P.~Rütimann, S.~van~de Geer, and C.-H. Zhang, ``Correlated
  variables in regression: Clustering and sparse estimation,'' \emph{Journal of
  Statistical Planning and Inference}, vol. 143, no.~11, pp. 1835--1858, 2013.

\bibitem{eldar2010block}
Y.~Eldar, P.~Kuppinger, and H.~Bolcskei, ``Block-sparse signals: Uncertainty
  relations and efficient recovery,'' \emph{IEEE Transactions on Signal
  Processing}, vol.~58, no.~6, pp. 3042--3054, 2010.

\bibitem{baraniuk2010model}
R.~G. Baraniuk, V.~Cevher, M.~F. Duarte, and C.~Hegde, ``Model-based
  compressive sensing,'' \emph{IEEE Transactions on Information Theory},
  vol.~56, no.~4, pp. 1982--2001, 2010.

\bibitem{rao2011convex}
N.~S. Rao, R.~D. Nowak, S.~J. Wright, and N.~G. Kingsbury, ``Convex approaches
  to model wavelet sparsity patterns,'' in \emph{IEEE International Conference
  on Image Processing (ICIP)}, 2011, pp. 1917--1920.

\bibitem{jacob2009group}
L.~Jacob, G.~Obozinski, and J.~Vert, ``Group {L}asso with overlap and graph
  {L}asso,'' in \emph{Proceedings of the 26th Annual International Conference
  on Machine Learning (ICML)}, 2009, pp. 433--440.

\bibitem{Lauritzen1996}
S.~L. Lauritzen, \emph{Graphical Models}.\hskip 1em plus 0.5em minus
  0.4em\relax Oxford University Press, USA, 1996.

\bibitem{BanerjeeGhaoui2008}
O.~Banerjee, L.~El~Ghaoui, and A.~d'Aspremont, ``Model selection through sparse
  maximum likelihood estimation for multivariate {G}aussian or binary data,''
  \emph{Journal of Machine Learning Research}, vol.~9, pp. 485--516, 2008.

\end{thebibliography}


\vfill

\end{document}